\mathchardef\mhyp="2D
\newtheorem{lemma}{Lemma}
\newtheorem{proposition}{Proposition}
\newtheorem{definition}{Definition}
\newtheorem{theorem}{Theorem}
\newtheorem{remark}{Remark}
\newtheorem{claim}{Claim}
\newtheorem{example}{Example}
\newtheorem{corollary}{Corollary}
\newcommand{\game}{P}
\newcommand{\pomdp}{P}
\newcommand{\states}{S}
\newcommand{\act}{\mathcal{A}}
\newcommand{\trans}{\delta}
\newcommand{\obs}{\mathcal{Z}}
\newcommand{\reward}{r}
\newcommand{\obsmap}{\mathcal{O}}
\newcommand{\distr}{\mathcal{D}}
\newcommand{\initd}{\lambda}
\newcommand{\supp}{\mathrm{Supp}}
\newcommand{\reals}{\mathbb{R}}
\newcommand{\belief}{b}
\newcommand{\discsum}{\mathsf{Disc}}
\newcommand{\discount}{\gamma}
\newcommand{\obsfunc}{\obsmap}
\newcommand{\hist}{h}
\newcommand{\ob}{o}
\newcommand{\histfunc}{H}
\newcommand{\E}{\mathbb{E}}
\newcommand{\evalue}{\mathit{eVal}}
\newcommand{\wvalue}{\mathit{wVal}}
\newcommand{\thr}{t}
\newcommand{\Rset}{\mathbb{R}}
\newcommand{\len}[1]{\mathit{len}(#1)}
\newcommand{\belseq}[2]{#1_{#2}}
\newcommand{\discpath}[2]{\def\EmptyTest{#1}\ifdefempty{\EmptyTest}{\discsum_{#2}}{\discsum_{#2}(#1)}}
\newcommand{\allowed}[2]{\mathit{Allow_{#2}^{#1}}}
\newcommand{\belsup}{B}
\newcommand{\valbound}{\Psi}
\newcommand{\fvalue}{\mathit{fVal}}
\newcommand{\succb}{\Delta}
\newcommand{\ValBelSup}{\mathit{VBelSup}}
\newcommand{\gvalue}{\mathit{gVal}}
\newcommand{\remain}[3]{\mathit{rem}^{#3}_{#2}(#1)}
\newcommand{\myparagraph}[1]{\paragraph{#1}}
\tikzstyle{Player1}=[circle, thick, minimum size=0.6cm, inner 
\tikzstyle{State}=[circle, font = \small, thick, minimum size=0.6cm, inner 
\tikzstyle{Final}=[circle, accepting, thick, minimum size=0.6cm, inner 
\tikzstyle{RState}=[circle, very thick, minimum size=0.8cm, inner 
\tikzstyle{tran}=[draw,->,font=\small]
\tikzstyle{obstyle}=[rounded corners,fill=gray!20]
\newcites{trsec}{References}
\title{Optimizing Expectation with Guarantees in POMDPs\\(Technical Report)}
\author{Krishnendu Chatterjee, Petr Novotn\'{y}\\
	IST Austria, Klosterneuburg, Austria\\
	\texttt{krishnendu.chatterjee@ist.ac.at, pnovotny@ist.ac.at}
	\and Guillermo A. P\'{e}rez\thanks{Author supported by an F.R.S.-FNRS
	Aspirant fellowship.}, Jean-Fran\c{c}ois Raskin\\
	Universit\'{e} Libre de Bruxelles, Brussels, Belgium\\
	\texttt{jraskin@ulb.ac.be, gperezme@ulb.ac.be}
	\and \foreignlanguage{serbian}{\DJ{}or\dj{}e \v{Z}ikeli\'c}\\
	University of Cambridge, Cambridge, UK\\
	\texttt{dz277@cam.ac.uk}}
\begin{document}

\maketitle

\begin{abstract}
A standard objective in partially-observable Markov decision processes (POMDPs)
is to find a policy that maximizes the expected discounted-sum payoff. However,
such policies may still permit unlikely but highly undesirable outcomes, which
is problematic especially in safety-critical applications. Recently, there has
been a surge of interest in POMDPs where the goal is to maximize the probability
to ensure that the payoff is at least a given threshold, but these approaches do
not consider any optimization beyond satisfying this threshold constraint. In
this work we go beyond both the ``expectation'' and ``threshold'' approaches and
consider a ``guaranteed payoff optimization (GPO)'' problem for POMDPs, where we
are given a threshold $t$ and the objective is to find a policy $\sigma$ such
that a) each possible outcome of $\sigma$ yields a discounted-sum payoff of at
least $t$, and b) the expected discounted-sum payoff of $\sigma$ is optimal (or
near-optimal) among all policies satisfying a). We present a practical approach
to tackle the GPO problem and evaluate it on standard POMDP benchmarks. 
\end{abstract}

\section{Introduction}
The \textit{de facto} model for decision making under uncertainty are
partially-observable Markov decision processes
(POMDPs)~\cite{LittmanThesis,PT87}, and they have been applied in diverse
applications ranging from planning~\cite{RN10}, to reinforcement
learning~\cite{LearningSurvey}, to robotics~\cite{KGFP09,kaelbling1998planning}.
One of the classical and fundamental payoff function for POMDPs is the {\em
discounted-sum payoff} that aggregates the rewards of the transitions as a
discounted sum. The traditional objective in POMDPs has been to obtain  policies
that maximize the expected discounted-sum payoff.

One crucial drawback of the traditional objective (that asks for expectation
maximization) is that it allows for undesirable events that can happen with low
probability.  For example, consider a policy $\sigma_1$ that with probability
$1/2$ achieves payoff $100$ and with probability $1/2$ achieves payoff $0$, and
a different policy $\sigma_2$ that achieves payoff~$20$ with probability~$1$.
If payoff values below~$10$ are undesirable, then the first policy, though
better for expected payoff, allows undesirable events with significant
probability, and hence the second policy is preferable.  Hence, there has been a
recent interest to study objectives where, instead of maximizing the expected
payoff~\cite{HYV16:risk-pomdps}, the goal is to maximize the probability that 
the payoff is above a threshold. 

A drawback of the approach to maximize the probability that the payoff exceeds a
threshold is that it ignores the optimization aspect of maximizing the
expectation.  In this work we consider an objective for POMDPs where both
aspects are present.  More precisely, we consider a ``guaranteed payoff
optimization (GPO)'' problem for POMDPs, where given a threshold $t$, the goal
is to maximize the expectation while ensuring that the payoff is at least $t$.

As a concrete motivation for the GPO problem, consider planning under 
uncertainty (e.g., self-driving cars) where certain events are catastrophic 
(e.g., crashes), and in the model they are assigned low payoffs. 
Such catastrophic events must be avoided even at the expense of 
expected payoff. That is, policies must maximize the expected payoff, 
ensuring the avoidance of catastrophic events. 
Hence, for planning in safety-critical applications the GPO problem 
is natural.

In this work, our main contributions are as follows:
\begin{compactenum}
\item We study the GPO problem for POMDPs, and present a practical solution
	approach for the problem.  In particular, given a POMDP with the GPO
	problem, we present a transformation to a different POMDP where it
	suffices to solve the traditional expectation objective.  Our solution
	approach first constructs a representation of all strategies that
	satisfies item a) of the GPO problem, and then we extend the
	partially-observable Monte Carlo planning (POMCP) approach to obtain
	optimal policies w.r.t. expectation among the above strategies. 
\item We present experimental results on several classical POMDP examples from
	the literature to show how our approach can efficiently  solve the GPO
	problem for POMDPs.
\end{compactenum}

\myparagraph{Related Works.}
Works studying POMDPs with discounted sum
range from theoretical results (see, e.g.,~\cite{PT87,LittmanThesis}) 
to practical tools (e.g.~\cite{khl08,SV:POMCP}). 
Recent works focus on extracting policies which ensure that,
with a given probability bound, the obtained discounted-sum payoff is above a
threshold (see, e.g.,~\cite{HYV16:risk-pomdps}). The problem of ensuring the 
payoff is above a
given threshold while optimizing the expectation has been considered for
fully-observable MDPs and the long-run average and stochastic shortest path 
objectives~\cite{bfrr14,rrs15}; and also with probabilistic thresholds for
long-run average payoff~\cite{ckk15}. As for POMDPs, we mention 
\emph{constrained 
POMDPs}~\cite{UH10:constrained-pomdp-online,PMPKGB15:constrained-POMDP}, where 
the aim is to maximize the expected payoff
while ensuring that the expectation of some other quantity is bounded. 
In contrast, our constraints are \emph{hard}, i.e. they must hold always, not
just on average.  
The work probably closest to ours is~\cite{STW16:BWC-POMDP-state-safety} that 
also considers maximizing expected payoff among all policies satisfying a 
given constraint, but there are two key differences from our work: 
they consider finite horizon POMDPs, while we consider infinite horizon ones, 
and more importantly, their constraints are \emph{state-based}, i.e. their 
policy must ensure that the execution of the POMDP does not go through certain
``violating'' states.  In contrast, our ``threshold constraint'' is
\emph{execution-based}: whether a execution yields payoff at least $t$ cannot be
determined solely by looking at the set of states appearing in the execution,
but the whole infinite execution has to be considered. This requires very
different techniques.  To our best knowledge, the GPO problem has never been
considered for POMDPs with discounted sum.

\section{Preliminaries}
\label{sec:prelims}
Throughout this work, we follow standard (PO)MDP notations 
from~\cite{Puterman2005,LittmanThesis}.

\subsection{POMDPs}
We denote by $\distr(X)$ the set of all probability distributions on a finite 
set $X$, i.e. all functions $f: X \rightarrow [0,1]$ such that $\sum_{x\in 
X}f(x)=1$. For $f\in \distr(X)$ we denote by $\supp(f)$ the \emph{support} of 
$f$, i.e. the set $\{x\in X\mid f(x)>0\}$.

\begin{definition}\textbf{POMDPs.}
A \emph{POMDP} is defined as a
tuple $\pomdp=(\states,\act,\trans,\reward,\obs,\obsmap,\initd)$ 
where
$\states$ is a finite set of \emph{states},
$\act$ is a finite alphabet of \emph{actions},
$\trans:S\times\act \rightarrow \distr(\states)$ is a 
 \emph{probabilistic transition function} that given a state $s$ and an
 action $a \in \act$ gives the probability distribution over the successor 
 states, 
$\reward: \states \times \act \rightarrow \reals$  is a reward 
 function,
$\obs$ is a finite set of \emph{observations},
$\obsmap:\states\rightarrow \distr(\obs)$ is a probabilistic 
 \emph{observation function} that 
  maps every state to a distribution over observations, and 
$\initd\in \distr(\states)$ is the \emph{initial belief}.
We abbreviate $\trans(s,a)(s')$ by 
 $\trans(s'|s,a)$,
\end{definition}

\begin{remark}[Deterministic observation function]
\label{rem:det-obs}
Deterministic observation functions of type $\obsfunc : S 
\rightarrow \obs$ are sufficient in POMDPs (see Remark $1$ in~\cite{CCGK14a}).
Informally, the probabilistic aspect of the observation function can be 
encoded into
the 
transition function and, by letting the product of the states and observations
be the new state-space,
we obtain a deterministic observation function.
Thus, without loss of generality, we will always consider 
observation functions of type $\obsfunc : S \rightarrow \obs$,
which greatly simplifies the notation.
\end{remark}

\myparagraph{Plays \& Histories.}
A \emph{play} (or an infinite path) in a POMDP is an infinite sequence $\rho = 
s_0 a_0 s_1 a_1 s_2 a_2 \ldots$ 
of states and actions such that
$s_0 \in \supp(\initd)$ and
for all $i \geq 0$ 
we have $\trans(s_i,a_i)(s_{i+1})>0$. We write $\Omega$ for the set of all 
plays. A \emph{finite path} (or just \emph{path}) is a finite prefix of a 
play ending with a state, i.e. a sequence from $(\states\cdot\act)^*\cdot 
\states$. 
A~\emph{history} is 
a finite sequence of actions and observations 
$\hist=a_0 \ob_1 \dots a_{i-1} \ob_i\in (\act\cdot\obs)^*$ 
such that there is a path $w=s_0 a_0 s_1 \dots a_{i-1} s_i$ with 
$\ob_j=\obsfunc(s_j)$
for each $1\leq j \leq i$.
We write $\hist=\histfunc(w)$ to indicate that history 
$\hist$ 
corresponds to a path $w$. The \emph{length} of a path (or history) $w$,
denoted by $\len{w}$, 
is the number of actions in $w$, and the length of a play $\rho$ is 
$\len{\rho}=\infty$.

\myparagraph{Beliefs.}
A \emph{belief} is a distribution on states (i.e. an element of $\distr(\states)$)
indicating the probability of being in each particular state given the current
history. The initial belief $\initd$ is given as part of the POMDP.
Then, in each step, when the history observed so far is $h$, the current belief
is $\belseq{\belief}{h}$, an action $a\in \act$ is played and an observation
$z\in \obs$ is received, the updated belief $\belseq{\belief}{h'}$ for history
$h'=hao$ can be computed by a standard formula~\cite{cassandra1998exact}.

\myparagraph{Infinite-horizon Discounted Payoff.}
Given a play $\rho =s_0 a_0 s_1 a_1 s_2 a_2 \ldots$ and a discount 
factor $0 
\leq \discount < 1$, the \emph{infinite-horizon discounted payoff} 
$\discsum_\discount$ of $\rho$ is:
\[ \textstyle
	\discpath{\rho}{\discount} = \sum_{i=0}^{\infty} 
	\discount^{i}\reward(s_i,a_i).
\]
We also define a discounted payoff of a finite path $w$ as 
$\discpath{w}{\discount}=\sum_{i=0}^{\len{w}-1} 
	\discount^{i}\reward(s_i,a_i).$

\myparagraph{Policies.}
A \emph{policy} is a blueprint for selecting actions based on 
the past history of observations and actions. 
Formally, it 
is a function $\sigma$ 
which assigns to a history a probability distribution 
over the actions, i.e. $\sigma(\hist)(a)$ is the probability of selecting 
action $a$ after observing history $\hist$ (we often abbreviate 
$\sigma(\hist)(a)$ to 
$\sigma(a\mid\hist)$).

\myparagraph{Consistent Plays.}
A play or a path $w$ is \emph{consistent} with a policy $\sigma$ if it can be
obtained by extending its finite prefixes using $\sigma$.  Formally, $w=s_0 a_0
s_1 a_1 \dots$ is consistent with $\sigma$ if for each $0\leq i \leq \len{w}$
there is action $a$ such that  $\sigma(a\mid \histfunc(s_0 a_0 \dots a_{i-1} s_i))>0$
and $\trans(s_{i+1}\mid s_i,a)>0$. A history $h$ is consistent with $\sigma$ if
there is a path $w$ consistent with $\sigma$ such that $h=H(w)$.

\myparagraph{Expected Value $\evalue^\pomdp$ of Policies.}
Given a POMDP $\pomdp$, a policy $\sigma$, a discount factor $\discount$, and 
an initial belief 
$\initd$,
the \emph{expected value}
of $\sigma$ from $\initd$ is the expected 
value of the 
infinite-horizon discounted sum 
under policy $\sigma$ when starting in a state sampled from $\initd$:
\(
	\evalue^\pomdp(\sigma) = \E_\initd^{\sigma}[\discpath{}{\discount}].
\)
This definition can be formalized by a standard construction of a probability 
measure induced by $\sigma$ over the set of all plays, which also gives rise to 
the expectation operator $\E^\sigma_\initd$~(see, e.g.,~\cite{Puterman2005}).

\myparagraph{Worst-Case Value $\wvalue^{\pomdp}$ of Policies.}
The \emph{worst-case} value of a policy $\sigma$ from belief $\initd$ is 
\( \textstyle
\wvalue^{\pomdp}(\sigma)=\inf_{\rho} \discpath{\rho}{\discount},
\)
where the infimum is taken over the set of \emph{all plays that are consistent 
with 
$\sigma$} and start in a state sampled from $\initd$.

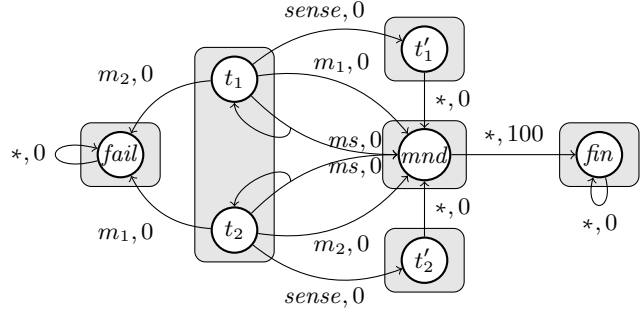
\begin{figure}
\centering
\begin{tikzpicture}
\newlength{\obsboundingbox}
\setlength{\obsboundingbox}{2mm}

\node[State] (t1) at (-2,1) {$t_1$};
\node[State] (t1o) at (0.5,1.4) {$t_1'$};
\node[State] (t2) at (-2,-1) {$t_2$};
\node[State] (t2o) at (0.5,-1.4) {$t_2'$};
\node[State] (tgt) at (0.5,0) {$\mathit{mnd}$};
\node[State] (bh) at (-3.5,0) {$\mathit{fail}$};
\node[State] (fin) at (2.8,0) {$\mathit{fin}$};

\draw[tran, loop left, looseness =12] (bh) to node {$*,0$} (bh);
\draw[tran, bend right] (t1) to node[auto,swap] {$m_2,0$} (bh);
\draw[tran, bend left] (t2) to node[auto] {$m_1,0$} (bh);
\draw[tran, bend left] (t1) to node[above] {$\mathit{sense},0$} (t1o);
\draw[tran, bend right] (t2) to node[below] {$\mathit{sense},0$} (t2o);
\draw[tran] (tgt) to node[above] {$*,100$} (fin);
\draw[tran] (t1o) to node[right] {$*,0$} (tgt);
\draw[tran] (t2o) to node[right] {$*,0$} (tgt);
\draw[tran, bend left] (t1) to node[above] {$m_1,0$} (tgt);
\draw[tran, bend right] (t2) to node[below] {$m_2,0$} (tgt);

\draw[tran, out = -45, in=180] (t1) to 
node[pos=0.3,coordinate,name=help1,auto] 
{} node[pos=0.15,font=\small,label={[font=\small,label distance = 
	-0mm]0:{$\mathit{ms},0$}}] {} (tgt);
\path[draw,tran,out=-89,in=-90] (help1) to (t1);
\draw[tran, in=180] (t2) to node[pos=0.3,coordinate,name=help2,auto] 
{} node[pos=0.15,font=\small,label={[font=\small,label distance = 
	-0mm]0:{$\mathit{ms},0$}}] {} (tgt);
\path[draw,tran,out=89,in=90] (help2) to (t2);
\draw[tran, loop below] (fin) to node {$*,0$} (fin);

\begin{pgfonlayer}{background}
      \node[coordinate,above left = \obsboundingbox and \obsboundingbox+1mm of
      t1] (dum1) {};
      \node[coordinate, below right = \obsboundingbox and \obsboundingbox+1mm of
      t2] (dum2) {};
      \draw[obstyle] (dum1) rectangle (dum2);

      \node[coordinate,above left = \obsboundingbox and \obsboundingbox+1mm of
      tgt] (dum3) {};
      \node[coordinate, below right = \obsboundingbox and \obsboundingbox+1mm 
of
      tgt] (dum4) {};
      \draw[obstyle] (dum3) rectangle (dum4);
      
      \node[coordinate,above left = \obsboundingbox and \obsboundingbox+1mm of
            t1o] (dum5) {};
            \node[coordinate, below right = \obsboundingbox and 
            \obsboundingbox+1mm 
      of
            t1o] (dum6) {};
            \draw[obstyle] (dum5) rectangle (dum6);
            
\node[coordinate,above left = \obsboundingbox and \obsboundingbox+1mm of
      t2o] (dum7) {};
      \node[coordinate, below right = \obsboundingbox and \obsboundingbox+1mm 
of
      t2o] (dum8) {};
      \draw[obstyle] (dum7) rectangle (dum8);

\node[coordinate,above left = \obsboundingbox and \obsboundingbox+1mm of
      bh] (dum9) {};
      \node[coordinate, below right = \obsboundingbox and \obsboundingbox+1mm 
of
      bh] (dum10) {};
      \draw[obstyle] (dum9) rectangle (dum10);
      
\node[coordinate,above left = \obsboundingbox and \obsboundingbox+1mm of
fin] (dum11) {};
\node[coordinate, below right = \obsboundingbox and \obsboundingbox+1mm 
of
fin] (dum12) {};
\draw[obstyle] (dum11) rectangle (dum12);
\end{pgfonlayer}

\end{tikzpicture}
\caption{Illustrative POMDP. We assume a discount factor
	$\discount=\frac{1}{2}$.  Gray rectangles represent observations. The
	only probabilistic branching occurs when $\mathit{ms}$ is played in
	$t_1$ or $t_2$, and for both $i\in\{1,2\}$ we have
	$\trans(\mathit{mnd}\mid t_i,\mathit{ms})=\frac{3}{5} $ and $\trans(t_i
	\mid t_i,\mathit{ms})=\frac{2}{5}$. The initial belief $\initd$ assigns
	$\frac{9}{10}$ to state $t_1$ and $\frac{1}{10}$ to $t_2$.  Asterisks
	denote that a transition is performed under any action.}
\label{fig:ex}
\end{figure}

\begin{example}
\label{ex:wcv-ev}
Figure~\ref{fig:ex} shows a toy POMDP: A mining robot has to mine ore, which 
can be of two types 
(states $t_1$ and $t_2$). The exact type is unknown, but 
$t_1$ is more likely to occur (initial belief $\initd$). The goal is to 
reach the ``ore mined'' 
($\mathit{mnd}$) state, in which a lump-sum reward is received. The robot can 
use several mining modes: safe mode (action $\mathit{ms}$), which succeeds with 
probability $0.6$ and does not do anything if it fails, or type-specific mining
modes ($m_1$ and $m_2$) which succeed if applied on the correct type but result
in a catastrophic failure if used on a wrong type. It can also use a sensor to
accurately determine the type (after which a type-specific action can be safely
used), at a cost of a one-step delay.

An exhaustive analysis of possible policies
reveals that the
expected value is maximized by any policy $\sigma$ which selects $m_1$ in the
first step (we then have $\evalue^\pomdp(\sigma)=45$). However, the worst-case
value of such a policy is $0$, as it can result in entering $\mathit{fail}$
after the first step. On the other hand, a policy $\sigma'$ which plays
$\mathit{sense}$ in the first step has
$\evalue^\pomdp(\sigma)=\wvalue^\pomdp(\sigma)=25$.
\end{example}

\myparagraph{Main Computational Questions.}
The standard \emph{POMDP planning} problem asks to compute (or approximate) the
policy maximizing the expected value.
In \emph{online POMDP planning}, instead
of computing the whole policy we have to compute, in each time step, the best
action in the current situation. In other words, we must compute a good local
approximation of a (near-)optimal policy.  \cite{RPPC08:online-planning-pomdp}.
In contrast, in the \emph{threshold} planning problem we are asked to compute a
policy maximizing the worst-case value and thus provide strict guarantees on the
performance of the system~\cite{ZP:games-graphs}.  In this paper, we combine
these two approaches and study the \emph{guaranteed payoff optimization (GPO)}
problem, where we are given a POMDP $\pomdp$
and a threshold $\thr\in \Rset$ and we have to compute a policy $\sigma$ such that
\begin{compactenum}[a)]
\item $\sigma$ satisfies a \emph{threshold constraint}:
	$\wvalue^\pomdp(\sigma)$ is at least $t$.
\item Let $\gvalue^\pomdp(t)$ denote the best expected value obtainable while
	ensuring a worst-case payoff of at least $t$, i.e.
	$\gvalue^\pomdp(t):=\sup\{\evalue(\pi)\mid 
	\wvalue^\pomdp(\pi)\geq t\}$.
	Among all policies that satisfy item a), $\sigma$ has 
	$\varepsilon$-maximal expected
	value, i.e. 
	\(
		\evalue^\pomdp(\sigma) \ge
		\gvalue^\pomdp(t) - \varepsilon.
	\)
\end{compactenum}

To efficiently tackle the GPO problem we aim to 
compute, in an online fashion, a local approximation of policy $\sigma$ 
above. However, we 
\emph{do not} relax requirement a). Approximations notwithstanding, the 
online planning algorithm we 
seek is such that given $t$, the discounted payoff of every single play that can be 
produced by the algorithm is at least $t$.

\begin{example}
\label{ex:bwc}
Take the POMDP in Figure~\ref{fig:ex} and a threshold $t=5$. As shown in 
Example~\ref{ex:wcv-ev}, a policy $\sigma'$ playing $\mathit{sense}$ in the 
first step satisfies $\wvalue^\pomdp(\sigma')\geq t$. However, there are better 
(w.r.t. the expected value) policies satisfying this constraint. The best such 
policy is a policy $\sigma''$ which twice plays $\mathit{ms}$ and then plays
$\mathit{sense}$. This policy satisfies $\evalue^\pomdp(\sigma'')=37$ and
$\wvalue^\pomdp(\sigma'')=6.25$. (Also note that the optimal policy to maximize
the expected payoff plays $\mathit{m_1}$ at the very start. However, with
non-zero probability, this strategy violates the worst-case threshold $t=5$.)
\end{example}

\section{Policies for GPO Problem}

We first show the GPO problem is different from the classical expectation
maximization.
\begin{example}[Beliefs are not sufficient for GPO.]
It is known that beliefs form a sufficient statistic of history for achieving
the optimal expected value, i.e. there is always a deterministic belief-based
policy $\sigma$ --- that is, a policy such that  for each history $h$ the
distribution $\sigma(h)$ is Dirac and determined solely by the belief after
observing $h$ --- with optimal expected value~\cite{sondik}.  However, beliefs
are not a sufficient statistic for the GPO problem, as witnessed by
Example~\ref{ex:bwc}: suppose that we use policy $\sigma''$ and consider
histories $h=\mathit{ms}~o~\mathit{ms}~o$ and $\bar{h}=\mathit{ms}~o$, where
$o$ is the observation received in $t_1$ and $t_2$. The beliefs $b_{h}$ and
$b_{\bar{h}}$ are identical, and yet $\sigma''(h)\neq \sigma''(\bar{h})$, i.e.
$\sigma''$ is not belief-based.
\end{example}

\myparagraph{Overview of Policy Representation.}
We show (in Corollary~\ref{cor:sufficient-statistic}) that a sufficient 
statistic 
for solving the GPO 
problem is a 
tuple $(\belief_h,\remain{h}{\discount}{t})$, where $\belief_h$ is the belief 
after history 
$h$ and $\remain{h}{\discount}{t}$ is the ``remaining'' distance to the 
threshold which we need to accumulate in the future. Formally, 
\[
	\remain{h}{\discount}{t} =
	\left(t-\min \{\discpath{w}{\discount}\mid
	\histfunc(w)=h\}\right)/\gamma^{\len{h}}.
\]
This is similar to other 
(PO)MDP planning problems that work with 
thresholds~\cite{White:threshold-mdps,HYV16:risk-pomdps}. However, we prove 
more: we obtain a precise local characterization of policies that satisfy 
the threshold constraint. More precisely, we show that 
for each history $h$, there is a 
set of
\emph{allowed} actions $\allowed{t}{\discount}(h)$ such that a policy 
$\sigma$ 
satisfies 
$\wvalue^P(\sigma)\geq t$ if and only if for each history $h$ it holds 
$\supp(\sigma(h))\subseteq \allowed{t}{\discount}(h)$. We show that the 
function $\allowed{t}{\discount}$ can be finitely 
represented 
and, for any history $h$, its value can be
computed algorithmically. This permits us to split the solution of the 
GPO problem into two separate parts: 1.) We compute
the function $\allowed{t}{\discount}$, and 2.) we use it to restrict 
a  
standard
online planning algorithm so that it always returns an action 
allowed for the current history.

\myparagraph{Allowed Actions $\allowed{t}{\discount}$.}
Intuitively, an action 
$a$ should be allowed after some history $h$ only if the payoff we are guaranteed 
to accumulate using $a$ in the current step (i.e. $\min_{s\in \supp(\belief_h)} 
r(s,a)$) plus the best payoff which we 
can guarantee from the next step onward is at least $\remain{h}{\discount}{t}$. 
To formalize the ``best payoff guaranteed from the next step on'' we define the 
\emph{future value} of any history $h$ as
\[ \textstyle
\fvalue(h) = \sup_{\sigma} \wvalue^{\pomdp[\belief_{h}]}(\sigma),
\]
where $\pomdp[\belief_{h}]$ is a POMDP identical to $\pomdp$ except for having 
initial belief $\belief_{h}$ and the supremum is taken over all policies in 
$\pomdp[\belief_{h}]$.

\myparagraph{Belief Supports Suffice for the Worst Case.}
The crucial observation is that the future value of a history $h$ is determined 
only by the support of $b_h$.
\begin{lemma}\label{lem:only-belief-support}
If histories $h,h'$ in a POMDP $\pomdp$ are such that 
$\supp{(b_{h})}=\supp{(b_{h'})}$, 
then $\fvalue(h)=\fvalue(h')$.
\end{lemma}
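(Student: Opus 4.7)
The plan is to exploit the fact that $\wvalue$ is defined via an infimum over consistent plays, and that "consistency" is a purely combinatorial (support-based) notion: whether a play $\rho=s_0 a_0 s_1 \dots$ is consistent with a policy $\sigma$ starting from an initial distribution $\lambda$ depends only on whether $s_0\in\supp(\lambda)$, whether $\trans(s_{i+1}\mid s_i,a_i)>0$, and whether $\sigma$ assigns positive probability to each $a_i$ after the corresponding history. The actual numerical values of $\lambda$ play no role.

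Concretely, I would first fix an arbitrary policy $\sigma$ and show that
\[
\wvalue^{\pomdp[b_h]}(\sigma)=\wvalue^{\pomdp[b_{h'}]}(\sigma).
\]
Let $\mathit{Cons}(\sigma,\lambda)$ denote the set of plays consistent with $\sigma$ that start in a state from $\supp(\lambda)$. By the definition of consistency above, $\mathit{Cons}(\sigma,b_h)$ and $\mathit{Cons}(\sigma,b_{h'})$ depend on $b_h$ and $b_{h'}$ only through their supports, so the assumption $\supp(b_h)=\supp(b_{h'})$ yields $\mathit{Cons}(\sigma,b_h)=\mathit{Cons}(\sigma,b_{h'})$. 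Since $\wvalue^{\pomdp[\lambda]}(\sigma)=\inf_{\rho\in\mathit{Cons}(\sigma,\lambda)}\discpath{\rho}{\discount}$, the two infima are taken over the same set and are therefore equal.

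Taking the supremum over all policies $\sigma$ on both sides then gives $\fvalue(h)=\fvalue(h')$, which is the claim. A minor point to observe is that the set of admissible policies in $\pomdp[b_h]$ and $\pomdp[b_{h'}]$ is literally the same, since a policy is a function on histories in the underlying observation-action alphabet and does not depend on the initial belief; this ensures that the suprema are taken over identical sets.

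The only step that requires care is the verification that consistency of a play with a policy is indeed support-based, i.e.\ does not quantitatively depend on $\lambda$; this is immediate from the definition of a play and of consistency given earlier, so there is no real obstacle. The lemma is essentially a direct consequence of the fact that $\wvalue$ is an $\inf$ rather than an expectation.
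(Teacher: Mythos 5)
Your proof is correct, but it takes a genuinely different route from the paper's. The paper does not argue directly from the definitions: in its technical appendix, Lemma~\ref{lem:only-belief-support} is obtained as a by-product of a subset-construction reduction to a full-observation discounted-sum game. Concretely, the paper builds a weighted arena $\Gamma_\pomdp$ whose states are observation-consistent subsets of $\states$, exhibits a bijection $\mu$ between histories/policies/plays of $\pomdp$ and paths/policies/plays of $\Gamma_\pomdp$ that preserves worst-case values, and then observes that since $\Gamma_\pomdp$ is determined only by the supports of $\trans$ and $\initd$ (not by the probability values), the lemma follows. Your argument instead stays inside the POMDP: consistency of a play with a policy is a purely support-based notion, so for each fixed $\sigma$ the infima defining $\wvalue^{\pomdp[b_h]}(\sigma)$ and $\wvalue^{\pomdp[b_{h'}]}(\sigma)$ range over the same set of plays, and one then takes suprema over identical policy sets. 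This buys you two things: it is more elementary, and it is actually more general, since the paper's arena construction needs the observable-rewards assumption for its weight function $w(q,a,q')=r(s,a)$, $s\in q$, to be well defined, whereas the lemma is stated (and true) for arbitrary POMDPs --- your proof is essentially the formalization of the intuition the paper states right after the lemma (``the worst-case value does not depend on any transition probabilities''). What the paper's heavier route buys in exchange is reuse: the same game construction underlies the characterization and computation of $\fvalue$ (Proposition~\ref{prop:fval-char}, Theorem~\ref{thm:fval-complexity}) and the EXP-completeness result, so there the lemma comes for free. One small imprecision in your write-up: under the paper's definitions a history must be witnessed by a path starting in the support of the initial belief, so the set of histories (hence the domain of policies) does \emph{a priori} depend on the initial belief; the policy sets coincide here exactly because $\supp(b_h)=\supp(b_{h'})$, which is the fact your argument already supplies, so the gap is cosmetic.
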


Intuitively, this is because the worst-case value of a policy (and thus also a 
future value of a history) does not 
depend on any transition probabilities.
In a slight abuse of notation,
we sometimes treat $\fvalue$ as a function from $2^\states$ to $\Rset$, i.e. 
$\fvalue(B)$, for $B\subseteq S$, is equal to $\fvalue(h)$ for all 
histories $h$ such that $\supp(b_h)=B$. 

\myparagraph{$\valbound$ as an Approximation of $\fvalue$.}
Since computing $\fvalue(B)$ exactly can be inefficient in practice, we often 
need to work with approximations of $\fvalue(B)$, without relaxing the 
threshold 
constraint. We thus introduce a notion of a 
$\valbound$-allowed 
action.
Let $\valbound\colon 2^{\states} \rightarrow \Rset$ be a function 
assigning numbers to belief supports. We say that an action $a$ is 
\emph{$\valbound$-allowed} for $t\in \mathbb{R}$ after history $h$, and
write it 
$a\in \Psi\mhyp\allowed{t}{\discount}(h)$, if for all
states $s\in \supp(b_h)$ and all observations $o\in\obs$ such that $hao$ is a
history it holds that
\begin{equation}
\label{eq:allowed-act}
r(s,a)+\discount\cdot\valbound(\supp(b_{hao})) \geq 
\remain{h}{\discount}{t}.
\end{equation}
If $\Psi$ is the function $\fvalue$, we write simply 
$a\in \allowed{t}{\discount}(h)$. 
We typically aim at 
computing a lower bound on $\fvalue$, i.e. a function $\valbound$ such that 
$\valbound(\belsup)\leq \fvalue(\belsup)$ for each $\belsup \in 2^\states$. 
Then, as shown below, playing $\valbound$-allowed actions still guarantees that 
the threshold $t$ is 
eventually surpassed. 

\myparagraph{Correctness of the Approximation.}
The correctness of the definition is summarized in the following proposition. 
We 
say that a policy 
$\sigma$ is $\valbound$-safe for $t\in \Rset$ if for each history 
$h$ consistent with $\sigma $ 
it holds that $\supp(\sigma(h)) 
\subseteq \Psi\mhyp\allowed{t}{\discount}(h)$. 

\begin{proposition}
\label{prop:allowed}
	Let $\valbound\colon 2^\states \rightarrow \Rset$ be a function such that
	$\valbound(B)\leq \fvalue(B)$ for each $B\in 2^\states$.  Then any
	policy $\sigma$ that is $\Psi$-safe for $t$ satisfies
	$\wvalue^{\pomdp}(\sigma)\geq t$. Moreover a policy $\pi$ is
	$\fvalue$-safe for $t$ if and only if $\wvalue^{\pomdp}(\pi)\geq t$.
\end{proposition}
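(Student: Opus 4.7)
For the first claim, the plan is to fix a $\Psi$-safe $\sigma$ and any play $\rho = s_0 a_0 s_1 a_1 \ldots$ consistent with $\sigma$, and to telescope the local inequality provided by $\Psi$-allowedness. Writing $h_i = \histfunc(s_0 a_0 \ldots s_i)$ for the history and $d_i = \discpath{s_0 a_0 \ldots s_i}{\gamma}$ for the partial discounted sum, the defining inequality of $a_i \in \Psi\mhyp\allowed{t}{\gamma}(h_i)$, specialised to the realised state $s_i$ and observation $\obsfunc(s_{i+1})$, gives $r(s_i,a_i) + \gamma\,\Psi(\supp(b_{h_{i+1}})) \geq \remain{h_i}{\gamma}{t}$. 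Since the prefix $s_0 a_0 \ldots s_i$ is one of the paths realising $h_i$, its discounted payoff $d_i$ is at least $\min\{\discpath{w}{\gamma} \mid \histfunc(w) = h_i\}$, so $\remain{h_i}{\gamma}{t} \geq (t-d_i)/\gamma^i$. Multiplying by $\gamma^i$ and rearranging yields the invariant $d_{i+1} + \gamma^{i+1}\Psi(\supp(b_{h_{i+1}})) \geq t$ for every $i \geq 0$. As $i\to\infty$, $d_{i+1}$ converges to $\discpath{\rho}{\gamma}$ while the second summand has non-positive $\limsup$ because $\Psi\leq \fvalue$ is uniformly bounded above by $R_{\max}/(1-\gamma)$; hence $\discpath{\rho}{\gamma}\geq t$ for every consistent $\rho$, and $\wvalue^\pomdp(\sigma)\geq t$.

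The forward direction of the iff follows immediately by specialising to $\Psi=\fvalue$, so the real work is the converse, which I approach by contrapositive: assuming $\pi$ is not $\fvalue$-safe, I produce a play consistent with $\pi$ whose discounted payoff is strictly less than $t$. By hypothesis, there are a $\pi$-consistent history $h$, an action $a\in\supp(\pi(h))$, and a witness $(s,o)$ with $s\in\supp(b_h)$ and $hao$ a history, for which $r(s,a) + \gamma\fvalue(\supp(b_{hao})) < \remain{h}{\gamma}{t}$. The plan is to glue three pieces into a single $\pi$-consistent play $\rho$: (a) a path $w$ with $\histfunc(w)=h$ whose endpoint $\tilde s$ is a predecessor under $a$ of some state $s'$ with $\obsfunc(s')=o$, available because $hao$ is a history and automatically $\pi$-consistent because consistency of a path is determined by its history alone; (b) the transition $a\cdot s'$; and (c) an $\varepsilon$-near infimiser $\rho'$ of $\wvalue^{\pomdp[b_{hao}]}(\pi_{hao})\leq \fvalue(\supp(b_{hao}))$ appended as the tail, which exists by definition of the infimum. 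The discounted payoff $\discpath{w}{\gamma} + \gamma^{|h|}r(\tilde s,a) + \gamma^{|h|+1}\discpath{\rho'}{\gamma}$ then drops below $t$ once combined with the witnessing inequality for sufficiently small $\varepsilon$.

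The main obstacle is aligning the ingredients in (a)--(c): the infimum defining $\wvalue^{\pomdp[b_{hao}]}(\pi_{hao})$ ranges over plays starting from any state of $\supp(b_{hao})$, not necessarily from the $s'$ reached in (b); meanwhile, the reward term involves the particular witness state $s$, while the prefix minimising $\discpath{w}{\gamma}$ over $\histfunc(w)=h$ may end elsewhere. I plan to handle this by first pushing the witness $(s,o)$ into a jointly realisable form, exploiting that $r(\cdot,a)$ and $\fvalue(\supp(b_{ha\cdot}))$ depend on $s$ and $o$ independently so the failure of the inequality decouples into a bad reachable state and a bad reachable observation, and then choosing the prefix $w$ to end at the predecessor of the starting state of the near-optimal tail. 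A brief convergence argument, together with the uniform upper bound on $\fvalue$ already invoked for Part 1, closes out the reverse direction.
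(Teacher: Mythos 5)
Your telescoping argument for the first claim is correct and complete: specialising \eqref{eq:allowed-act} to the realised state $s_i$ and observation $\obsfunc(s_{i+1})$, bounding $\remain{h_i}{\discount}{t}$ from below by $(t-d_i)/\discount^{i}$, and killing the term $\discount^{i+1}\valbound(\supp(b_{h_{i+1}}))$ in the limit (only the upper bound $\valbound\le\fvalue\le\max_{s,a}\reward(s,a)/(1-\discount)$ is needed) is exactly what is required, and it is more self-contained than the paper, which dismisses this direction with ``induction'' and handles the whole proposition via a reduction to a full-observation discounted-sum game. The forward half of the equivalence by taking $\valbound=\fvalue$ is likewise fine.

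The gap is in the converse, and it is precisely the obstacle you flag in your last paragraph: your proposed repair does not exist. The violation of \eqref{eq:allowed-act} is a violation by a \emph{sum}, and nothing forces a state realising the low reward and an observation realising the low future value to be jointly realisable; when they are not, no play consistent with $\pi$ witnesses the violation, so the ``decoupling into a bad reachable state and a bad reachable observation'' cannot be ``pushed into a jointly realisable form.'' In fact, for general POMDPs the converse is \emph{false}. Take states $s_1,s_2,g,b$, a single action $a$, observations with $\obsfunc(s_1)=\obsfunc(s_2)$ and $g,b$ absorbing and mutually distinguishable, deterministic transitions $s_1\xrightarrow{a}g$ and $s_2\xrightarrow{a}b$, rewards $\reward(s_1,a)=0$, $\reward(s_2,a)=10$, $\reward(g,a)=10$, $\reward(b,a)=0$, $\initd$ uniform on $\{s_1,s_2\}$, $\discount=\nicefrac{1}{2}$, $t=10$. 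The unique policy $\pi$ has exactly two consistent plays, each with payoff exactly $10$, so $\wvalue^{\pomdp}(\pi)=10\ge t$; yet $\pi$ is not $\fvalue$-safe, because the (not jointly realisable) pair $(s_1,\obsfunc(b))$ gives $\reward(s_1,a)+\discount\,\fvalue(\{b\})=0<10=\remain{\cdot}{\discount}{t}$. The missing ingredient is the \emph{observable rewards} assumption, which the paper's own appendix proof states up front and which your proof never invokes: under it, rewards are constant on valid belief supports (Lemma~\ref{lem:obsrew}), hence all paths realising a given history have the same discounted payoff, the reward mismatch disappears, and your gluing goes through after one reordering --- first pick the $\varepsilon$-near-worst tail of $\pi_{hao}$, starting at some $s''\in\supp(b_{hao})$, then pick as prefix any path realising $hao$ that ends at $s''$ (its payoff automatically equals the minimum). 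The paper avoids this bookkeeping entirely: with observable rewards it maps the POMDP onto a full-observation discounted-sum game over belief supports and cites the known result that in such games $\fvalue$-safety is necessary and sufficient for the worst-case threshold. Without importing that assumption at this exact point, no completion of your argument can succeed.
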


\begin{corollary}
\label{cor:sufficient-statistic}
	Assume that there is a policy $\sigma$ with $\wvalue^P(\sigma)\geq
	t$.  Then there is also a policy $\pi$ such that $\wvalue^P(\pi) \geq t$
	and $\evalue^P(\pi)=\gvalue^\pomdp(t)$, and moreover, $\pi$ is
	belief-and-payoff, based, i.e. for all histories $h,h'$ such that
	$(b_h,\remain{h}{\discount}{t})=(b_{h'},\remain{h'}{\discount}{t})$ it
	holds $\pi(h)=\pi(h')$.
\end{corollary}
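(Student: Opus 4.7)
The plan is to combine Proposition~\ref{prop:allowed} with a reduction to a standard discounted MDP on a sufficient statistic, and then project the resulting optimal policy onto the $(b, r)$ coordinates, where $r$ abbreviates $\remain{h}{\discount}{t}$.

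First, by Proposition~\ref{prop:allowed}, the policies with $\wvalue^\pomdp(\sigma) \geq t$ are exactly the $\fvalue$-safe policies for $t$, so the definition $\gvalue^\pomdp(t) = \sup\{\evalue^\pomdp(\pi) \mid \wvalue^\pomdp(\pi) \geq t\}$ coincides with the supremum of $\evalue^\pomdp$ over $\fvalue$-safe policies. I would then verify that $\allowed{t}{\discount}(h)$ depends only on the pair $(b_h, \remain{h}{\discount}{t})$: condition~\eqref{eq:allowed-act} involves only the rewards $\reward(s,a)$ at $s \in \supp(b_h)$, the successor belief supports $\supp(b_{hao})$ (determined by $b_h$, $a$, $o$), the values $\fvalue(\supp(b_{hao}))$ (which depend only on the belief support by Lemma~\ref{lem:only-belief-support}), and $\remain{h}{\discount}{t}$ itself. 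Denote the resulting set-valued map by $A(b, r)$.

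Second, I would lift the optimization to an auxiliary MDP $\mathcal{M}$ whose state is the augmented tuple $(b, r, (m_s)_{s \in \supp(b)})$, with $m_s$ the minimum accumulated discounted payoff along histories ending in state $s$. This refinement renders both the belief and the remaining-threshold dynamics Markovian, while the allowed action set at $(b, r, (m_s)_s)$ remains $A(b, r)$, independent of the $(m_s)$ component. Standard theory for discounted MDPs on Borel state spaces with bounded rewards then guarantees a stationary optimal policy $\pi^*$ on $\mathcal{M}$ whose value from the initial augmented state equals $\gvalue^\pomdp(t)$.

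Finally, I would extract from $\pi^*$ a policy depending only on the $(b, r)$ projection and lift it to the POMDP via $\pi(h) := \pi^*(b_h, \remain{h}{\discount}{t})$. At every consistent history this policy plays from $A(b_h, \remain{h}{\discount}{t}) = \allowed{t}{\discount}(h)$, so it is $\fvalue$-safe and hence satisfies $\wvalue^\pomdp(\pi) \geq t$ by Proposition~\ref{prop:allowed}; by construction it attains $\gvalue^\pomdp(t)$. The main obstacle is the projection step: although the constraint set and immediate expected reward depend only on $(b, r)$, the $r$-update in $\mathcal{M}$ does depend on the finer vector $(m_s)_s$, so showing that an optimal policy can be chosen $(b, r)$-measurable requires either a Bellman fixed-point argument restricted to $(b, r)$-measurable value functions, or a symmetrization exploiting the fact that changing $(m_s)_s$ while preserving $(b, r)$ does not alter the set of safe continuations.
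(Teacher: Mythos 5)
Your proposal follows the paper's intent for its first half (Proposition~\ref{prop:allowed} reduces the GPO problem to expectation maximization over $\fvalue$-safe policies, and the allowed sets depend only on $(\supp(b_h),\remain{h}{\discount}{t})$), but it then diverges into an auxiliary construction and stalls exactly at the step you yourself flag: the projection from the enriched statistic $(b,r,(m_s)_s)$ back down to $(b,r)$. This is a genuine gap, and neither of your two suggested repairs can close it in the generality you have set up. The symmetrization premise --- that changing $(m_s)_s$ while preserving $(b,r)$ does not alter the set of safe continuations --- is false: for an extension $hg$ of $h$, the minimum accumulated payoff over paths with history $hg$ decomposes as $\min_{s\in\supp(b_h)}\bigl(m_s+\discount^{\len{h}}\cdot c_s(g)\bigr)$, where $c_s(g)$ is the minimal discounted payoff of a continuation from $s$ realizing the action/observation suffix $g$ (and is $+\infty$ if no such continuation exists). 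This depends on the whole vector $(m_s)_s$, not on $\min_s m_s$ alone, so two histories with identical $(b_h,\remain{h}{\discount}{t})$ can have different values $\remain{hg}{\discount}{t}$, hence different sets $\allowed{t}{\discount}(hg)$, hence different feasible continuations. For the same reason the $r$-dynamics is genuinely not Markovian in $(b,r)$, so a Bellman fixed-point argument ``restricted to $(b,r)$-measurable value functions'' has nothing to restrict to: the operator of your auxiliary MDP does not preserve $(b,r)$-measurability.

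The ingredient you are missing is the observable-rewards assumption, which is the regime in which the paper actually operates: its appendix proof of Proposition~\ref{prop:allowed} (on which the corollary rests) is carried out for POMDPs with observable rewards, via the reduction to a full-observation discounted game. Under observable rewards any two paths with the same history accumulate the same discounted payoff (this is exactly why $\reward(B,a)$ is well defined, cf.\ Lemma~\ref{lem:obsrew}), so your vector $(m_s)_s$ is constant --- every entry equals $t-\discount^{\len{h}}\cdot\remain{h}{\discount}{t}$ --- and the remaining-payoff update collapses to the deterministic Markovian rule $\remain{hao}{\discount}{t}=\bigl(\remain{h}{\discount}{t}-\reward(\supp(b_h),a)\bigr)/\discount$ used by G-POMCP. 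At that point your outline closes with no auxiliary component and no projection: by Proposition~\ref{prop:allowed} the feasible policies are exactly the $\fvalue$-safe ones; these are precisely the policies of a discounted belief-MDP over the augmented states $(b,r)$ whose action constraints \eqref{eq:allowed-act} depend only on $(\supp(b),r)$ (using Lemma~\ref{lem:only-belief-support}); and the standard existence of deterministic stationary optimal policies for discounted MDPs with finite action sets yields a $(b,r)$-based policy attaining $\gvalue^\pomdp(t)$. That is the paper's (implicit) argument; in short, rather than enriching the statistic and trying to project back down --- which fails on the non-Markovianity just described --- one should invoke the assumption under which $(b,r)$ is already a sufficient, Markovian statistic.
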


From~\eqref{eq:allowed-act} we see that to compute $ 
\allowed{t}{\discount}(h)$ we have to keep track of $\remain{h}{\discount}{t}$ 
(which 
can be easily done online) and to compute $\fvalue(\supp(b_h))$ (or a 
suitable under-approximation thereof). In the next section we show how to do 
the latter. 

\begin{example}\label{ex:allowed}
	Consider the POMDP from Figure~\ref{fig:ex} with a threshold $t = 12$. 
	Then $\fvalue(\{fin\}) =
	\fvalue(\{\mathit{fail}\}) = 0$,
	$\fvalue(\{t_1,t_2\}) = 25$, $\fvalue(\{t'_1\}) = \fvalue(\{t'_2\}) =
	50$, and $\fvalue(\{\mathit{mnd}\}) = 100$. Initially, for the empty 
	history, we have
	$\remain{\cdot}{0.5}{12} = 12$ and therefore the only allowed actions are
	$\mathit{ms}$ and $\mathit{sense}$ because for all $i \in \{1,2\}$ we have
	\(
		r(t_i,m_{3-i}) + \discount \fvalue(\{\mathit{fail}\}) = 0 <
		\remain{\cdot}{0.5}{12} = 12.
	\)
	Suppose that $\mathit{ms}$ is played and that the next observation
	witnessed is $\obsfunc(t_1) = \obsfunc(t_2)$ (thus, the belief is the same as
	before). We have $\remain{\mathit{ms}\obsfunc(t_1)}{0.5}{12.5} = 25$.
	In this case, the only allowed action is $\mathit{sense}$ because for
	all $i \in \{1,2\}$
	\(
		r(t_i,\mathit{ms}) + \discount \fvalue(\{t_1,t_2\}) = 12.5 <
		\remain{\mathit{ms}\obsfunc(t_1)}{0.5}{12} = 24
	\)	
	and $m_1$ and $m_2$ are still not allowed (since we have not accumulated
	any payoff and have the same belief as before). Hence, $\mathit{sense}$
	is played and consequently we obtain a payoff of $12.5$ (because of 
	discounting). We remark that $12.5$
	is, as required, above the threshold $t = 12$.
\end{example}

\section{Computing Future Values}\label{sec:future-values}

The threshold constraint in the GPO problem is \emph{global},
i.e. it talks about \emph{all} runs 
compatible with a policy. Hence, solving the GPO problem is 
unlikely to be amenable to \emph{purely} online methods, which compute only 
local 
approximations of policies. In this 
section we show how to compute future values in an offline pre-processing step.
Although this requires a global analysis of a POMDP, the pre-processing step can
be done efficiently since computation of future values only requires working
with belief supports rather than beliefs.

\myparagraph{Belief Supports \& Valid Belief Supports $\ValBelSup$.}
A belief support $B\subseteq 2^\states$ is \emph{valid} if either 
$B=\supp(\initd)$ or there is a history $h$ such that $B=\supp(b_h)$. Only 
valid supports can be encountered during the planning process and thus 
we only need to compute future values thereof. We denote by 
$\ValBelSup(\pomdp)$ the set of valid belief supports of POMDP $\pomdp$; the 
set can be computed by a simple iterative procedure. 

\myparagraph{Obsevable Rewards.}
We present efficient computation of future values under the assumption that 
\emph{rewards 
are observable}. This holds for many real-world applications, see, e.g. 
examples in~\cite{HYV16:risk-pomdps,CCGK15}. Formally, POMDP $\pomdp$ has 
observable rewards if $\reward(s,a)=\reward(s',a)$ whenever 
$\obsfunc(s)=\obsfunc(s')$. From a theoretical point of view, observability of 
rewards is necessary since without it, the computation of future values is at
least as hard as solving a long-standing open problem in algebraic number
theory.
More precisely, if the rewards of a given POMDP are not observable, the
computation of future values is at least as hard as solving the \emph{target
discounted sum problem}, a long-standing open problem in automata theory related
to other open problems in algebra~\cite{BHO15:target-disc-sum}.
However, for POMDPs with
unobservable rewards we can at least obtain an under-approximation $\valbound$ of
$\fvalue$, and hence our framework is also applicable to them.

\begin{lemma}
\label{lem:obsrew}
If rewards in $\pomdp$ are observable, then for each $B\in\ValBelSup(\pomdp)$ 
and each $s,s'\in B,a\in \act$ it holds $\reward(s,a)=\reward(s',a)$.
\end{lemma}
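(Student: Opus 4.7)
The plan is to reduce the lemma to a purely structural property of reachable belief supports, namely that every valid belief support is \emph{observation-homogeneous}: all of its states map to the same observation under $\obsfunc$. Once this property is in hand, the lemma is immediate, because observability of rewards says that $\reward(s,a)$ depends only on $\obsfunc(s)$ and $a$.

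I would first establish the homogeneity claim by induction on the structure of $\ValBelSup(\pomdp)$, i.e.\ on the length of a history $h$ with $B=\supp(b_h)$. For the base case $B = \supp(\initd)$, I rely on the standard convention (obtainable by prepending a fresh uniform initial observation to the model if needed) that all states in $\supp(\initd)$ share a common observation. For the inductive step, suppose $B = \supp(b_h)$ with $h = h' a o$ of length at least one, and that $\supp(b_{h'})$ is observation-homogeneous. By the standard Bayesian belief-update formula, $s \in \supp(b_h)$ holds if and only if there exists $s'' \in \supp(b_{h'})$ with $\trans(s \mid s'', a) > 0$ and $\obsfunc(s) = o$. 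The last equality shows that every state in $B$ produces observation $o$, establishing homogeneity of $B$.

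Having established that every $B \in \ValBelSup(\pomdp)$ is observation-homogeneous, the lemma follows in one line: pick any $s, s' \in B$; then $\obsfunc(s) = \obsfunc(s')$, and by the definition of observable rewards (namely $\reward(s,a) = \reward(s',a)$ whenever $\obsfunc(s) = \obsfunc(s')$), we conclude $\reward(s,a) = \reward(s',a)$ for every $a \in \act$.

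There is no real technical obstacle here: the proof is a direct unrolling of the definitions of belief update, valid belief support, and observable rewards. The only mildly delicate point is the base case, where one must invoke the standard modelling convention that the initial belief is supported on states sharing a single observation; once that convention is in place, the induction runs frictionlessly.
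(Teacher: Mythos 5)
Your proof is correct and is essentially the argument the paper leaves implicit: Lemma~\ref{lem:obsrew} is stated without an explicit proof, and the intended justification is exactly your induction showing that every valid belief support is observation-homogeneous --- this is already visible in the paper's successor operator $\succb(B,a,o)=o\cap\bigcup_{s\in B}\supp(\trans(s,a))$, which is by construction a subset of the observation class $o$ --- after which reward observability closes the argument in one line. The only point worth stressing is your base case: the paper's definitions do not in fact force $\supp(\initd)$ to be observation-homogeneous (histories begin with an action, so no initial observation is ever received), and since $B=\supp(\initd)$ is by definition a member of $\ValBelSup(\pomdp)$, the lemma as literally stated can fail without the convention you invoke --- take $\initd$ uniform on two states with distinct observations and distinct rewards, so that reward observability holds vacuously. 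Your explicit flagging of that convention is therefore not a weakness but a repair of a gap the paper glosses over; note that its running example (Figure~\ref{fig:ex}) does satisfy the convention, since $t_1$ and $t_2$ share an observation.
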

We thus define $\reward(B,a)$ as $\reward(s,a)$ for some $s\in B$.

\myparagraph{Future Value Characterization.}
We start by providing a
characterization of future values. A 
\emph{successor} of a belief support $B$ under action $a$ and observation $o$ 
is a belief support 
$\succb(B,a,o)=o \cap \bigcup_{s\in B} \supp(\delta(s,a))$. Consider the 
following system of $\max$-$\min$ equations with variables $x_B$, 
$B\in 
\ValBelSup(\pomdp)$:
\begin{equation}
\label{eq:bellman}
x_B = \max_{a\in\act}\min_{\substack{o\in \obs\\ \succb(B,a,o)\neq \emptyset}} 
\reward(B,a) + \discount\cdot
x_{\succb(B,a,o)}.
\end{equation}
(Each $B\in\ValBelSup(\pomdp)$ appears on the LHS of exactly one equation in the 
system.)

\begin{proposition}
\label{prop:fval-char}
The system~\eqref{eq:bellman} has a unique solution 
$\{\tilde{x}_B\}_{B\in 
\ValBelSup(P)}$, and it satisfies $\tilde{x}_B = \fvalue(B)$.
\end{proposition}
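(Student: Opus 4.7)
The plan is to (i) establish uniqueness via a Banach fixed-point argument and (ii) show that $\fvalue$ itself satisfies the system, in the spirit of a standard Bellman decomposition, using the earlier lemmas to justify that both sides only depend on the belief support.

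\textbf{Uniqueness.} Let $T\colon \Rset^{\ValBelSup(P)}\to\Rset^{\ValBelSup(P)}$ be the operator whose $B$-th coordinate is the right-hand side of~\eqref{eq:bellman}. For any $x,y$ and any $B$, the elementary inequality
\[
|\max_a\min_o f(a,o) - \max_a\min_o g(a,o)| \leq \max_{a,o}|f(a,o)-g(a,o)|
\]
gives $|T(x)_B - T(y)_B|\leq \discount\cdot\max_{a,o}|x_{\succb(B,a,o)}-y_{\succb(B,a,o)}| \leq \discount\|x-y\|_\infty$. Since $\discount<1$, $T$ is a contraction on a non-empty complete metric space, so it has a unique fixed point.

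\textbf{$\fvalue$ is a fixed point.} Fix $B\in\ValBelSup(P)$ and let $\belief$ be any belief with $\supp(\belief)=B$; by Lemma~\ref{lem:only-belief-support} the value $\fvalue(B)$ is well-defined. Fix any policy $\sigma$ in $\pomdp[\belief]$. Worst-case values are unaffected by randomization of the policy, so I may assume $\sigma$ is deterministic; let $a=\sigma(\epsilon)$ be the first action it plays, and for each observation $o$ such that $\succb(B,a,o)\neq\emptyset$ let $\sigma_o$ denote the continuation policy played after history $ao$. Using Lemma~\ref{lem:obsrew} (so $\reward(s,a)=\reward(B,a)$ for all $s\in B$) together with the definition of $\wvalue$ as an infimum over consistent plays, and noting that the adversary in the infimum is free to choose both the initial state $s\in B$ and the next state $s'$ (and hence the next observation $o$), I obtain
\[
\wvalue^{\pomdp[\belief]}(\sigma) \;=\; \reward(B,a) + \discount\cdot\min_{o:\,\succb(B,a,o)\neq\emptyset}\wvalue^{\pomdp[\belief_{ao}]}(\sigma_o).
\]
Taking a supremum over $\sigma$ amounts to taking a max over first actions $a$ and then, independently for each observation branch $o$, a sup over continuations. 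The sup--min inequality gives the upper bound
\[
\fvalue(B) \leq \max_a\min_o\bigl[\reward(B,a)+\discount\cdot\fvalue(\succb(B,a,o))\bigr].
\]
For the matching lower bound, fix $a$ and $\varepsilon>0$; for each $o$ pick a policy $\sigma_o^\varepsilon$ in $\pomdp[\belief_{ao}]$ with $\wvalue^{\pomdp[\belief_{ao}]}(\sigma_o^\varepsilon)\geq \fvalue(\succb(B,a,o))-\varepsilon$ (using Lemma~\ref{lem:only-belief-support} to ensure the target depends only on the support). Gluing these continuations to the initial action $a$ yields a policy whose worst-case value from $\belief$ is $\reward(B,a)+\discount\cdot\min_o\wvalue^{\pomdp[\belief_{ao}]}(\sigma_o^\varepsilon)\geq \reward(B,a)+\discount\cdot\min_o\fvalue(\succb(B,a,o))-\discount\varepsilon$. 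Letting $\varepsilon\to 0$ and taking the max over $a$ gives the reverse inequality, so $\fvalue$ is a fixed point of $T$.

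\textbf{Main obstacle.} The only nontrivial step is the decomposition of $\wvalue^{\pomdp[\belief]}(\sigma)$ into the ``one-step reward plus discounted min over observations'' form: I have to justify that the worst-case adversary effectively chooses both an initial state in $B$ and a successor, that the resulting reward is $\reward(B,a)$ irrespective of this choice (Lemma~\ref{lem:obsrew}), and that the continuation value $\wvalue^{\pomdp[\belief_{ao}]}(\sigma_o)$ depends only on $\succb(B,a,o)$ (Lemma~\ref{lem:only-belief-support}); together with independently choosing near-optimal continuations per observation, this yields the desired Bellman-type equation and, combined with uniqueness, the proposition.
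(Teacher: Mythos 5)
Your proof is correct, but it takes a genuinely different route from the paper's. You give a direct, self-contained argument: uniqueness via a Banach fixed-point/contraction argument (the operator given by the right-hand side of~\eqref{eq:bellman} is a $\discount$-contraction in the sup norm), and then a Bellman-style one-step decomposition of $\wvalue^{\pomdp[\belief]}(\sigma)$ --- using Lemma~\ref{lem:obsrew} for the common one-step reward and the rectangular (per-observation independent) structure of continuation policies to turn $\sup$-$\min$ into $\min$-$\sup$ --- to show that $\fvalue$ is a fixed point. The paper instead proceeds by reduction: in the appendix it constructs a full-observation weighted arena $\Gamma_\pomdp$ whose states are belief supports, establishes a worst-case-value-preserving bijection between policies of $\pomdp$ and policies of $\Gamma_\pomdp$, and then observes that~\eqref{eq:bellman} is exactly the classical optimality (Shapley) system for the resulting two-player zero-sum discounted game, whose unique solution is known to equal the game value~\cite{ZP:games-graphs}. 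Your approach essentially re-proves Shapley's theorem in this special case; it is more elementary, avoids the game formalism, and makes explicit why the adversary's choice of observation separates from the controller's independent choice of continuations. The paper's reduction buys more from a single construction: the same arena also yields Lemma~\ref{lem:only-belief-support}, the value-iteration algorithm with its convergence bound, and the EXP-completeness result for the worst-case threshold problem. Two small points you should tighten: (i) ``worst-case values are unaffected by randomization'' is stated too strongly --- randomization can strictly \emph{lower} the worst-case value; what you need (and what is true, since every play consistent with a deterministic selection from the supports is consistent with the randomized policy) is that randomization never increases it, so the supremum over all policies equals the supremum over policies making deterministic choices; (ii) you should note that $\ValBelSup(\pomdp)$ is closed under nonempty successors $\succb(B,a,o)$, so your operator $T$ indeed maps $\Rset^{\ValBelSup(\pomdp)}$ to itself.
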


\myparagraph{Game Perspective for the Worst Case.}
Hence, it suffices to find a solution to system~\eqref{eq:bellman}. But the 
form of the system is identical to the one characterizing optimal values 
in 
2-player zero-sum discounted games~\cite{ZP:games-graphs}. 
These games can be 
imagined as 
fully-observable MDPs in which the outcomes of actions are not resolved by 
a random choice but by a malicious adversary. The system~\eqref{eq:bellman}
\textit{per se} corresponds to a game where elements of $\ValBelSup(\pomdp)$ 
are the states, 
actions are the same as in $\pomdp$, and possible effects of actions are 
given by the function~$\succb$. 

\myparagraph{Algorithms to Compute Future Values.}
Hence, to compute future values in practice we can employ one of several efficient 
algorithms 
for 
solving discounted-sum games (e.g.~\cite{romain}). 
A simple yet efficient 
approach is to use the standard \emph{value iteration} for games: we compute a 
sequence $f^{(0)} f^{(1)} f^{(2)} \dots$ of functions of type 
$\ValBelSup(\pomdp)\rightarrow \Rset$ such that $f^{(0)}(B)=0$ for each $B$, and
for $i\geq 1$ we inductively define
\[
	f^{(i)}(B) = \max_{a\in\act}\min_{\substack{o\in \obs\\
	\succb(B,a,o)\neq \emptyset}} \reward(B,a) + \discount\cdot
	f^{(i-1)}(\succb(B,a,o)).
\]
From~\cite{ZP:games-graphs} it follows 
there is always $j$ such that for all $B \in \ValBelSup(\pomdp)$ we have
$f^{j}(B)=f^{j-1}(B)$,
i.e. $f^{j}(B)$ is the solution 
to~\eqref{eq:bellman}, and moreover $j\leq 
3+\log_2(\max_{(s,a)\in\states\times\act}|r(s,a)|)+\frac{1}{2}\cdot(|\states|+3)^2
\cdot\frac{
\log_2(\mathit{den}(\discount))}{1-\discount}$, where $\mathit{den}(\discount)$ 
is a 
denominator of $\discount$ in its reduced form. Hence, the value iteration 
converges in at most exponentially many steps.\footnote{Since the number 
$\frac{1}{1-\discount}$ can be exponential in the bitsize of $\discount$.}

\begin{theorem}
\label{thm:fval-complexity}
Future values of all valid belief supports in $\pomdp$ can be computed in time 
exponential in the size of $\pomdp$.
\end{theorem}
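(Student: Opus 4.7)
The plan is to assemble the theorem as a direct consequence of the structural results already established: Proposition~\ref{prop:fval-char} reduces the computation of future values to solving the max-min system~\eqref{eq:bellman}, which in turn has the shape of a Bellman equation for a finite discounted-sum game whose state space is $\ValBelSup(\pomdp)$. I will then argue that each ingredient (constructing the game, running value iteration, detecting convergence) can be carried out in time exponential in $|\pomdp|$.

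First, I would construct $\ValBelSup(\pomdp)$ explicitly by a forward saturation: start from $\{\supp(\initd)\}$, and repeatedly close under $\succb(\cdot,a,o)$ for all $a\in\act$ and $o\in\obs$, until no new belief support is produced. Since $\ValBelSup(\pomdp)\subseteq 2^\states$, this loop terminates after at most $2^{|\states|}$ additions, and each addition requires examining $|\act|\cdot|\obs|$ successors, each computable in polynomial time from a set representation of $B$. Hence this pre-processing runs in time exponential in $|\pomdp|$. Along the way, by Lemma~\ref{lem:obsrew}, the quantity $\reward(B,a)$ is well-defined and can be tabulated.

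Next, I set up the value iteration $f^{(0)},f^{(1)},\dots$ as defined just before the theorem. A single iteration computes $f^{(i)}(B)$ for every $B\in\ValBelSup(\pomdp)$ by a max-min over at most $|\act|\cdot|\obs|$ pairs, so one iteration costs $O(|\ValBelSup(\pomdp)|\cdot|\act|\cdot|\obs|)$, which is exponential in $|\pomdp|$. Invoking the bound from \cite{ZP:games-graphs} recalled in the excerpt, convergence occurs within
\[
j \;\leq\; 3+\log_2\!\Bigl(\max_{(s,a)}|r(s,a)|\Bigr)+\tfrac{1}{2}(|\states|+3)^2\cdot\tfrac{\log_2(\mathit{den}(\discount))}{1-\discount}
\]
iterations when the result is applied to the game on $\ValBelSup(\pomdp)$ induced by~\eqref{eq:bellman}; note that while $|\ValBelSup(\pomdp)|$ may be exponential, the bound $j$ itself only grows polynomially in $|\states|$ and in the bitsize of $\max|r(s,a)|$ and $\mathit{den}(\discount)$, so it remains at most exponential in $|\pomdp|$ (and in fact the extra blow-up comes from the $\frac{1}{1-\discount}$ factor, exactly as noted in the footnote). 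Convergence is detected by the equality $f^{(j)}=f^{(j-1)}$, which can be checked in exponential time since the table has exponentially many entries, each storing a rational number of polynomial bitsize.

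Finally, Proposition~\ref{prop:fval-char} guarantees that the fixed point $f^{(j)}$ coincides with $\fvalue$ on $\ValBelSup(\pomdp)$, so the algorithm returns the correct values. Multiplying the exponential number of iterations by the exponential cost per iteration yields an overall exponential-time bound. The main obstacle I anticipate is purely bookkeeping: one must argue carefully that the iteration bound cited in~\cite{ZP:games-graphs}, which is stated for games whose state space size appears linearly, yields an \emph{overall} exponential bound once instantiated on the belief-support game whose own state space is already exponential in $|\pomdp|$; substituting $|\ValBelSup(\pomdp)|\leq 2^{|\states|}$ into that bound and observing that $\log$ absorbs the exponent makes this quantitative check routine but necessary.
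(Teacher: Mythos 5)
Your proposal is correct and follows essentially the same route as the paper: the paper also obtains Theorem~\ref{thm:fval-complexity} by running game value iteration on the system~\eqref{eq:bellman} over $\ValBelSup(\pomdp)$ (equivalently, on the subset-construction discounted-sum game built in its appendix), invoking Proposition~\ref{prop:fval-char} for correctness and the convergence bound of~\cite{ZP:games-graphs} for termination, with the exponential size of the belief-support space giving the overall exponential bound. Your added bookkeeping (saturation construction of $\ValBelSup(\pomdp)$, per-iteration cost, instantiating the iteration bound on the exponential-size game) is exactly the routine check the paper leaves implicit.
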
 

Although the theoretical bound
is exponential, there are several 
reasons for the method to work well in practice:
(1.) In a concrete instance, the number of 
valid supports can be significantly smaller than exponential.
(2.) Reaching the fixed-point of the value iteration may also require 
significantly 
smaller number of steps than the theoretical upper bound suggests.
(3.) One can show that for each $i\geq 0$, $f^{(i)}\leq 
\fvalue$. Hence, even if reaching the fixed point takes too much time, we 
can set up a suitable timeout after which the value iteration is stopped, say at 
iteration $i$. Then, by Proposition~\ref{prop:allowed} any policy that is 
$f^{(i)}$-safe for $t$ has worst-case value $\geq t$.
(4.) Value iteration is a simple and standard algorithm for which
efficient implementations exist (see, e.g.,~\cite{ldk95,sv05}).

\myparagraph{Important note on $\valbound$:} generally, $\valbound\leq 
\fvalue$ does not guarantee that a $\valbound$-safe policy exists, which is 
necessary to apply Proposition~\ref{prop:allowed}. The 
following lemma resolves this.

\begin{lemma}
For any $i\geq 0$ the following holds for the functions $f^{(i)}$ produced by 
game value iteration: if $f^{(i)}(\supp(\initd))\geq t$, then there exists a 
policy $\sigma$ which is $f^{(i)}$-safe for $t$.
\end{lemma}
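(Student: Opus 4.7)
The plan is to construct $\sigma$ greedily from the iterates $f^{(k)}$ and to conclude using monotonicity $f^{(i-1)}\le f^{(i)}$ of the game value iteration. Define $\sigma$ as the deterministic policy that, at every history $h$ consistent with $\sigma$, plays an action $a_h\in\act$ attaining the outer maximum in
\[
	f^{(i)}(\supp(b_h)) \;=\; \max_{a\in\act}\min_{\substack{o\in \obs\\ \succb(\supp(b_h),a,o)\neq \emptyset}} \reward(\supp(b_h),a) + \discount\cdot f^{(i-1)}(\succb(\supp(b_h),a,o)).
\]
Since $\supp(b_h)\in\ValBelSup(\pomdp)$ throughout, every argument at which $f^{(i)}$ or $f^{(i-1)}$ is invoked lies in their domain.

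The core step is an induction on $|h|$ establishing the invariant $f^{(i)}(\supp(b_h))\geq \remain{h}{\discount}{t}$ for every $h$ consistent with $\sigma$. The base case $|h|=0$ is the lemma's hypothesis (using $\remain{\varepsilon}{\discount}{t}=t$ and $b_\varepsilon=\initd$). For the step, the choice of $a_h$ and the recurrence above give, for every observation $o$ with $\succb(\supp(b_h),a_h,o)\neq\emptyset$,
\[
	\reward(\supp(b_h),a_h) + \discount\cdot f^{(i-1)}(\succb(\supp(b_h),a_h,o)) \;\geq\; f^{(i)}(\supp(b_h)) \;\geq\; \remain{h}{\discount}{t}.
\]
Under observable rewards (the standing assumption of this section), one reads off from the definition of $\remain{\cdot}{\discount}{t}$ the update identity $\remain{hao}{\discount}{t}=(\remain{h}{\discount}{t}-\reward(\supp(b_h),a_h))/\discount$. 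Rearranging the displayed chain gives $f^{(i-1)}(\supp(b_{hao}))\geq \remain{hao}{\discount}{t}$, and monotonicity then yields $f^{(i)}(\supp(b_{hao}))\geq f^{(i-1)}(\supp(b_{hao}))\geq \remain{hao}{\discount}{t}$, closing the induction.

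Applying monotonicity once more on the left of the chain also gives $\reward(\supp(b_h),a_h)+\discount\cdot f^{(i)}(\succb(\supp(b_h),a_h,o))\geq \remain{h}{\discount}{t}$ for every valid $o$, which is precisely condition \eqref{eq:allowed-act} with $\valbound = f^{(i)}$. Hence $a_h\in f^{(i)}\mhyp\allowed{t}{\discount}(h)$ at every $h$ consistent with $\sigma$, so $\sigma$ is $f^{(i)}$-safe. The only non-trivial ingredient is the monotonicity $f^{(i-1)}\leq f^{(i)}$: it follows from a short separate induction on $i$, using that the Bellman operator on the right of \eqref{eq:bellman} is monotone and that $f^{(1)}(B)=\max_a \reward(B,a)\geq 0 = f^{(0)}(B)$ under the standard assumption of non-negative rewards (the general case reduces to this by a uniform additive shift of all rewards, which leaves the GPO problem unchanged after shifting the threshold accordingly).
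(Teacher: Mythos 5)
Your core argument is sound, and it supplies exactly the proof the paper omits: the paper states this lemma without proof, and its technical appendix only records the two ingredients you identify (the iterates approach $\fvalue$ monotonically from below, and this requires non-negative rewards). Concretely, the greedy choice of $a_h$, the inductive invariant $f^{(i)}(\supp(b_h))\geq\remain{h}{\discount}{t}$ along histories consistent with $\sigma$, the update rule $\remain{hao}{\discount}{t}=(\remain{h}{\discount}{t}-\reward(\supp(b_h),a))/\discount$ (which is indeed an identity here: by Lemma~\ref{lem:obsrew} rewards are constant on valid belief supports, so all paths with the same history carry the same discounted payoff), and the monotonicity step $f^{(i-1)}\leq f^{(i)}$ are all correct and fit together as you describe. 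Two small repairs: for $i=0$ your greedy definition refers to $f^{(-1)}$, which does not exist, so this case needs a separate (trivial) argument --- with non-negative rewards and $t\leq f^{(0)}(\supp(\initd))=0$, the quantity $\remain{h}{\discount}{t}$ stays non-positive along every play, so every action is allowed at every history; and you should invoke Lemma~\ref{lem:obsrew} explicitly when replacing $r(s,a)$ by $\reward(\supp(b_h),a)$ in~\eqref{eq:allowed-act}.

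The genuine flaw is the closing parenthetical: the general-reward case does \emph{not} reduce to the non-negative case by shifting. A uniform shift of all rewards by $c>0$ shifts the threshold, $\fvalue$, and the worst-case and expected values of every policy by $c/(1-\discount)$, but it shifts the finite iterate $f^{(i)}$ only by $c(1-\discount^{i})/(1-\discount)$; hence the hypothesis $f^{(i)}(\supp(\initd))\geq t$ is not invariant under this translation, and the shifted instance of the lemma says nothing about the original one. In fact the statement is simply false when rewards may be negative: take one state $s$, one action $a$ with $\reward(s,a)=-1$, a single observation, $\discount=\nicefrac{1}{2}$, $i=0$, $t=0$. Then $f^{(0)}(\{s\})=0\geq t$, yet at the empty history the only available action violates~\eqref{eq:allowed-act}, since $\reward(s,a)+\discount\cdot f^{(0)}(\{s\})=-1<0=\remain{h}{\discount}{t}$ for $h$ the empty history; so no $f^{(0)}$-safe policy exists (a similar computation kills every $i\geq 1$ with $t=f^{(i)}(\{s\})$). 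The correct treatment of signs is the one the paper adopts in its appendix: normalize the GPO \emph{instance} once and for all so that rewards are non-negative (this is legitimate because the problem itself --- unlike this lemma --- is invariant under shifting rewards and threshold together), and prove the lemma only under that standing assumption. With that replacement for your claimed reduction, your proof is complete.
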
 

In particular, if $\fvalue(\supp(\initd))\geq t$ then a 
$\fvalue$-safe policy for $t$ exists, irrespective of the way in 
which $\fvalue$ is computed.

\begin{figure*}
	\centering
	%\begin{center}
		\includegraphics{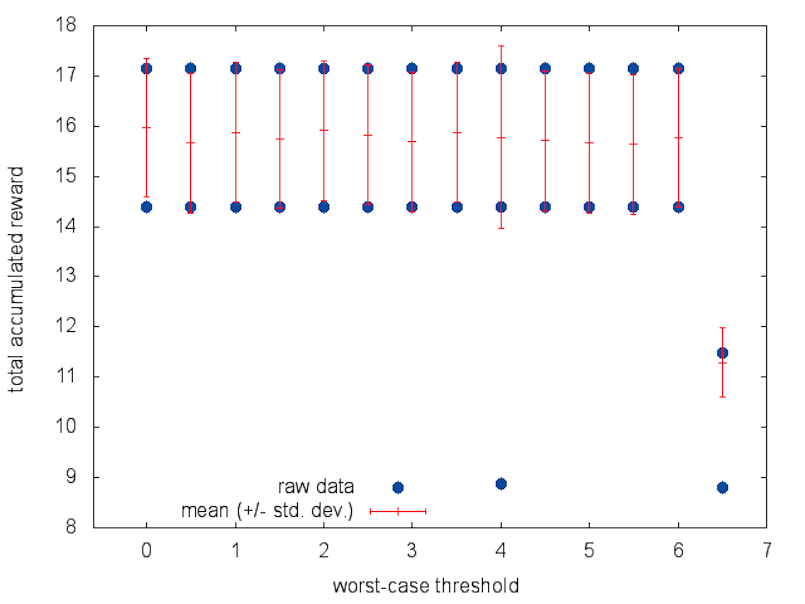}
		\includegraphics{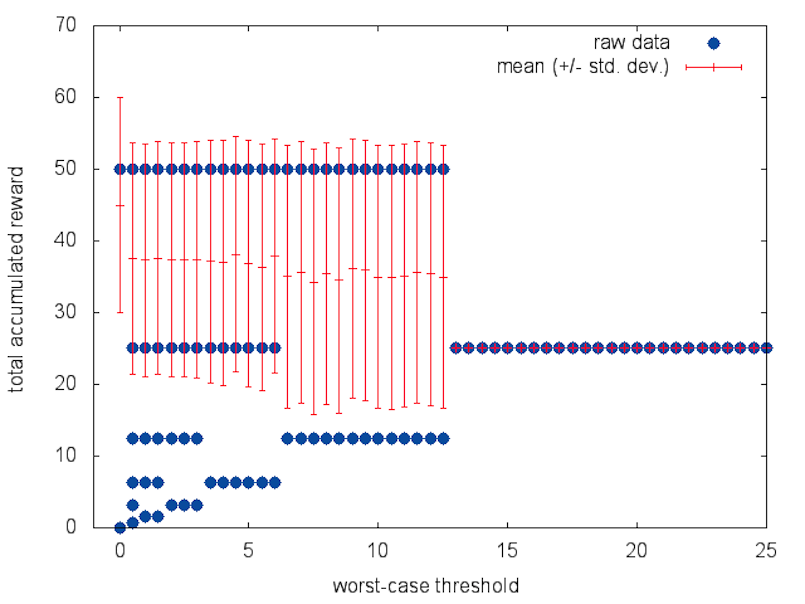}
		\includegraphics{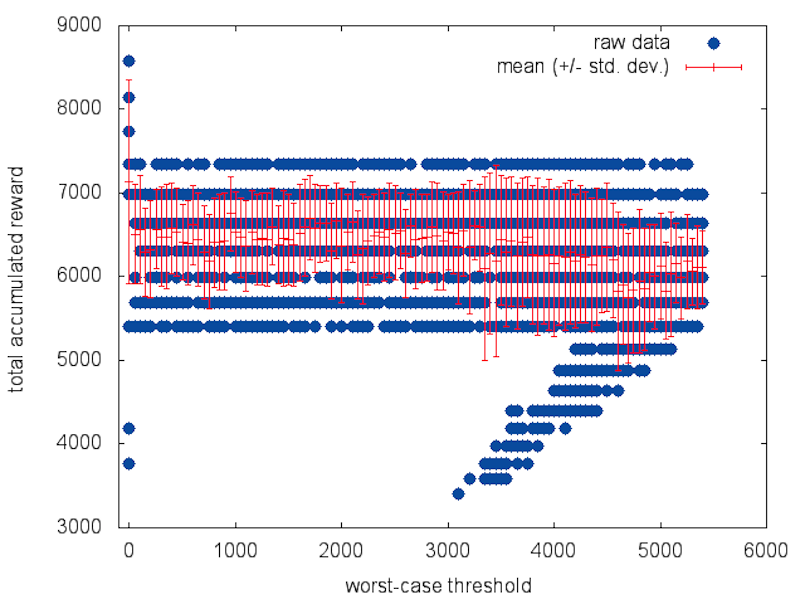}
	%\end{center}
	\caption{Plots of results obtained from simulating (1.) a RockSample
		benchmark, (2.) the POMDP from Example~\ref{ex:wcv-ev}, and (3.)
		a hallway benchmark with probabilistic spinning (a.k.a. traps),
		all with increasing worst-case thresholds (until
		$\fvalue(\supp(\initd))$). Each circle with
		coordinates $(x,y)$ corresponds to a simulation of G-POMCP, ran
		with worst-case threshold $x$, that obtained $y$ as accumulated
		payoff. The vertical bars show the mean and standard
		deviation per worst-case threshold. (We have plotted at least
		100 data-points per worst-case threshold for the RockSample
		benchmark; 1000 for Example~\ref{ex:wcv-ev}; 20 for the hallway
		benchmark.)
	}
	\label{fig:plots-benchmarks}
\end{figure*}

\section{Solving the GPO problem}

We solve the GPO problem by modifying the partially-observable Monte Carlo 
planning (POMCP) algorithm~\cite{SV:POMCP}.

\myparagraph{POMCP.}
POMCP is an online planning method 
which in each decision epoch aims to select the best action given the current 
history $h$.  
In each epoch,
POMCP performs a number of 
finite-horizon simulations starting from belief $b_h$ in order to 
compute a local approximation of the optimal expected value function: each 
simulation extends history $h$ by selecting actions according to 
certain rules 
until the horizon is reached. The payoff of the produced path is then 
evaluated, and the result is used to update the optimal value approximation. 
After all 
the simulations proceed, the best action according to the estimated values is 
played, a new observation is received, and the process continues as above.

\myparagraph{POMCP data-structure.}
POMCP 
stores the information gained in past simulations in a \emph{search tree}, 
in which
each node corresponds to some history $h'$ and contains belief 
$b_{h'}$, the 
number $N_{h'}$ of times the history has been observed in previous simulations, 
and 
an approximation of the optimal expected value from $b_{h'}$.
The search tree is used to guide simulations: each step in which the 
current history corresponds to an internal node of the tree is 
treated as a multi-armed bandit with parameters determined by numbers stored in 
children of this 
node, 
which balances exploration of new branches and 
exploitation of previous simulations (akin to the UCT algorithm for 
MDPs~\cite{KS06}). Once the simulation runs out of the 
scope of the search tree, it enters a \emph{rollout} phase, where a fixed 
policy (e.g. selecting actions at 
random) is used to extend paths. 

\myparagraph{G-POMCP: Adapting POMCP for GPO.}
We propose an augmentation of POMCP, which we call G-POMCP (\emph{guaranteed} 
POMCP), specified as follows: First we enrich the nodes of the search tree so
that
a node corresponding to  
a history $h$ additionally includes the set $B_h= \supp(b_h)$ and the number 
$R_h = \remain{h}{t}{\discount}$. When adding a new node to a search tree by 
extending history $h$ with action $a$ and observation $o$, these attributes for 
the new node are updated 
as follows:
$B_{hao}=\succb(B_h,a,o)$ and 
$R_{hao}=(\remain{h}{t}{\discount} 
- \reward(B_h,a))/\discount$. Note that updating $B_h$ to $B_{hao}$ requires 
just discrete set operations; as a matter of fact, the function $\succb$ is 
computed already during the off-line computation of future values, after which 
it can be stored and used to efficiently update $B_h$ during G-POMCP execution. 
In particular, updating $B_h$ is independent 
of updating $b_h$, which is important so as not to compromise the threshold 
constraints with issues of belief precision and particle 
deprivation. 

\myparagraph{G-POMCP: playing safe.}
The execution of G-POMCP then proceeds in almost the same way as 
in POMCP, with a crucial exception: Whenever G-POMCP is to select a (real or 
simulated) action 
it selects only among those in $\allowed{t}{\discount}(h)$, where $h$ is the 
current history. 
Note 
that checking whether an action is 
allowed is easy for histories within the search tree, since the necessary 
information ($B_{h}$ and $R_{h}$) is 
stored 
in 
nodes of the tree. Out of the scope of the search tree, we need to update the 
current belief support and remaining payoff online, as the simulation proceeds. 
While this somewhat increases the complexity of rollouts, as current belief 
supports must be kept updated (POMCP only keeps track of the current 
state and 
of payoff 
won so far), as noted above, updating belief supports is easier than 
updating beliefs. Moreover, this increase in complexity is only an issue in the 
initial steps of the algorithm, where rollout steps dominate over tree 
traversal. 
Previous sections yield the 
following result:

\begin{theorem}
\label{thm:G-POMCP-corr}
For each threshold $t\leq \fvalue(\supp(\initd))$ the following holds: for each
play $\rho = s_0 a_0 s_1 a_1\dots$ resulting from using G-POMCP on $\pomdp$
\textit{ad infinitum} it holds $\discpath{\rho}{\discount}\geq t$. This holds
independently of how precisely the algorithm approximates beliefs.
\end{theorem}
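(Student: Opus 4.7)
The plan is to reduce Theorem~\ref{thm:G-POMCP-corr} to an application of Proposition~\ref{prop:allowed} to the (possibly randomized) policy $\sigma_{\mathrm{G}}$ implicitly realized by G-POMCP's real-play choices. Two things need checking: (i)~$\sigma_{\mathrm{G}}$ is $\valbound$-safe for $t$ with respect to some under-approximation $\valbound\leq\fvalue$ for which a safe policy exists; and (ii)~the attributes $B_h$ and $R_h$ that G-POMCP uses to decide which actions lie in $\valbound\mhyp\allowed{t}{\discount}(h)$ are maintained \emph{exactly} along real plays, independently of any particle-based approximation of $b_h$.

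For (i), I would fix $\valbound$ to be either $\fvalue$ itself or a sufficiently late iterate $f^{(i)}$ of the game value iteration of Section~\ref{sec:future-values}; by $f^{(i)}\leq\fvalue$ and the hypothesis $t\leq\fvalue(\supp(\initd))$ one can ensure $\valbound(\supp(\initd))\geq t$. The lemma immediately preceding the theorem then supplies a witness $\valbound$-safe policy $\pi_*$, and a short induction along trajectories of $\pi_*$ shows that from every reachable valid belief support there is at least one $\valbound$-allowed action; hence G-POMCP never runs out of legal choices. Since G-POMCP picks real-play actions only from $\valbound\mhyp\allowed{t}{\discount}(h)$ by construction, the realized policy $\sigma_{\mathrm{G}}$ is $\valbound$-safe.

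For (ii), a straightforward induction on $\len{h}$ along a real play gives $B_h=\supp(b_h)$: the recurrence $B_{hao}=\succb(B_h,a,o)$ is a pure set-theoretic operation whose inputs never reference probabilities inside $b_h$. By Lemma~\ref{lem:obsrew}, $r(\cdot,a)$ is constant on $B_h$, so the reward actually accumulated in step $h\to hao$ equals $r(B_h,a)$; a second induction on the recurrence $R_{hao}=(R_h-r(B_h,a))/\discount$ then yields $R_h=\remain{h}{\discount}{t}$. Because neither recurrence uses belief probabilities, the safety check is immune to particle-filter drift and deprivation, which is precisely the ``independently of how precisely the algorithm approximates beliefs'' clause in the statement.

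Combining (i) and (ii), every real-play action of G-POMCP after history $h$ lies in $\valbound\mhyp\allowed{t}{\discount}(h)$, so $\sigma_{\mathrm{G}}$ is $\valbound$-safe for $t$. Proposition~\ref{prop:allowed} gives $\wvalue^\pomdp(\sigma_{\mathrm{G}})\geq t$, and since $\rho$ is consistent with $\sigma_{\mathrm{G}}$ and starts in some $s_0\in\supp(\initd)$, the definition of worst-case value immediately yields $\discpath{\rho}{\discount}\geq t$. The subtlest step, which I would spell out in full, is (i): it is not enough that $\valbound\leq\fvalue$ globally; one must propagate $\valbound(\supp(\initd))\geq t$ to nonemptiness of $\valbound\mhyp\allowed{t}{\discount}(h)$ at \emph{every} reachable history, because otherwise G-POMCP could be forced to commit to an unsafe action for lack of alternatives and the reduction to Proposition~\ref{prop:allowed} would break down.
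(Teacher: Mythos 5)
Your proposal matches the paper's own (implicit) proof, which obtains the theorem exactly by combining Proposition~\ref{prop:allowed}, the preceding lemma on existence of safe policies, and the fact that G-POMCP maintains $B_h$ and $R_h$ by exact discrete operations independent of belief approximation. One refinement: nonemptiness of the allowed sets at \emph{every} G-POMCP-reachable history (not only along trajectories of the witness policy $\pi_*$, which G-POMCP may depart from while still playing allowed actions) is cleanest via the invariant $R_h\leq\fvalue(B_h)$ propagated through the Bellman characterization of Proposition~\ref{prop:fval-char}: the maximizing action there is always allowed, and any allowed action preserves the invariant.
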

So unless it is impossible to satisfy the threshold constraint at all, it can 
be surely satisfied by using G-POMCP.

\myparagraph{Convergence.}
Another question is the one of convergence. An algorithm is said to be 
convergent in the limit if, assuming precise belief representation, the local 
approximation of optimal value converges to true optimal value (in our case 
to $\gvalue^\pomdp(t)$) as the number 
of simulations and their depth increases. 
The limit convergence of G-POMCP can be proved by a straightforward adaptation 
of the limit convergence proof of POMCP~\cite{SV:POMCP}: 
we map executions of 
G-POMCP on POMDP $\pomdp$ to the executions of UCT on a tree-shaped 
MDP $\pomdp'$, whose states are histories of $\pomdp$ (with the empty history as
root) and where finite paths correspond to extending histories in $\pomdp$ by 
playing allowed actions.

\section{Experiments}
We tested our algorithm on two classical sets of benchmarks. The first, Hallway,
was introduced in~\cite{lck95}. In a hallway POMDP, a robot navigates a
gridworld with walls and traps. We have considered variants in which traps cause
non-recoverable damage and another in which they just ``spin'' the robot ---
making him more uncertain about his current location in the grid. Additionally,
we have run our algorithm on RockSample POMDPs. The latter corresponds to the
classical scenario described first in~\cite{ss04}. (We use a slight adaptation 
with a single imprecise sensing action.) Our 
experimental results are summarized in Figure~\ref{fig:plots-benchmarks} and
Table~\ref{tab:latency}.

\myparagraph{Test Environment Specifications:}
\begin{inparaenum}[$(1.)$]
	\item CPU: $6$-Core Intel Zeon, $3.33$ GHz, $6$ cores;
	\item Memory: $256$ KB of L$2$ Cache, $12$ MB of L$3$ Cache, $32$
		GB;
	\item OS: Mac OS X $10.7.5$.
\end{inparaenum}

\myparagraph{Worst-Case vs. Expected Payoff.} In
Figure~\ref{fig:plots-benchmarks} we have plotted the results of running our
G-POMCP algorithm on several benchmarks. In all three graphics, the
trade-off between worst-case guarantees and expected payoff is clearly visible:
In the left figure, the expected payoff stays around $15.7$ for worst-case
thresholds between $0$ and $6$; then drops to $11.3$ for threshold values above
$6.5$. In the center figure, the expected payoff is ${\sim}44.7$ when the worst-case
threshold is $0$; stays around $36$ for thresholds between $1$ and $12$ (with a
slightly negative slope); then drops to $25$ for threshold values above
$12.5$. Finally, in the right figure, the expected payoff steadily decreases for
increasing worst-case threshold values. In particular, for threshold $0$ the
expected payoff is ${\sim}7137$ while for threshold $5150$ it is ${\sim}6161$.

\myparagraph{Latency.} In Table~\ref{tab:latency} we show the
\emph{latency} --- the amount of time it takes to determine, at each epoch,
which action to play next --- of G-POMCP on three of the benchmarks we
considered. (Though we have run the tool on several others, these are
the biggest.) Observe that, even for relatively big POMDPs, the average latency
is in the order of seconds. Also, note that the pre-processing step is not too
costly.

\begin{table}
	%\small
	\centering
	%\begin{center}
	\begin{tabular}{ l | c | c | c | c | c }
		No.
		& states
		& act.
		& obs.
		& pre. proc.
		& avg. lat.\\
		\hline
		\hline
		tiger
		& $7$
		& $4$
		& $6$
		& $< 0.001$s
		& $< 0.009$s\\
		\hline
		r.sample
		& $10207$
		& $7$
		& $168$
		& $184$s
		& $0.816$s\\
		\hline
		hallway
		& $2039$
		& $3$
		& $18$
		& $2.02$s
		& $1.308$s\\
	\end{tabular}
	%\end{center}

	\caption{Latency of G-POMCP with planning horizon of $1$K}
	\label{tab:latency}
\end{table}

\myparagraph{Tool Availability.} Our implementation of the G-POMCP algorithm can
be fetched from \texttt{https://github.com/gaperez64/GPOMCP}.

\section{Discussion}
In this work we have given a practical solution for the GPO problem. Our
algorithm, G-POMCP, allows to obtain a policy which ensures a worst-case
discounted-sum payoff value while optimizing the expected payoff. We have
implemented G-POMCP and evaluated its performance on classical families of
benchmarks. Our experiments show that our approach is efficient despite the
exact GPO problem being fundamentally more complicated.

\section*{Acknowledgements}
The research leading to these results was supported by the Austrian
Science Fund (FWF) NFN Grant no. S11407-N23 (RiSE/SHiNE); two ERC Starting grants
(279307: Graph Games, 279499: inVEST); the Vienna Science and Technology Fund (WWTF)
through project ICT15-003; and the People Programme (Marie Curie Actions) of
the European Union's Seventh Framework Programme (FP7/2007-2013) under
REA grant agreement no. [291734].

{\small
\bibliographystyle{alpha}
\bibliography{bibliography-master,new}

\newcommand{\etalchar}[1]{$^{#1}$}
\begin{thebibliography}{RPPCd08}

\bibitem[BFRR14]{bfrr14}
V{\'e}ronique Bruy{\`e}re, Emmanuel Filiot, Mickael Randour, and
  Jean-Fran\c{c}ois Raskin.
\newblock {Meet Your Expectations With Guarantees: Beyond Worst-Case Synthesis
  in Quantitative Games}.
\newblock In Ernst~W. Mayr and Natacha Portier, editors, {\em {STACS}},
  volume~25 of {\em {LIPIcs}}, pages 199--213. Schloss Dagstuhl -
  Leibniz-Zentrum fuer Informatik, 2014.

\bibitem[BHO15]{BHO15:target-disc-sum}
U.~Boker, T.~A. Henzinger, and J.~Otop.
\newblock {{T}he {T}arget {D}iscounted-{S}um {P}roblem}.
\newblock In {\em {LICS}}, pages 750--761, July 2015.

\bibitem[Bre16]{romain}
Romain Brenguier.
\newblock {A solver for {M}ean {P}ayoff {G}ames, based on gain and bias
  equations and the {Z3} {SMT} solver}.
\newblock \url{https://github.com/romainbrenguier/MeanPayoffSolver}, 2016.
\newblock Accessed date: 2016-08-07.

\bibitem[Cas98]{cassandra1998exact}
A.R. Cassandra.
\newblock {\em {Exact and approximate algorithms for partially observable
  Markov decision processes}}.
\newblock Brown University, 1998.

\bibitem[CCGK14]{CCGK14a}
Krishnendu Chatterjee, Martin Chmelik, Raghav Gupta, and Ayush Kanodia.
\newblock {Optimal Cost Almost-sure Reachability in POMDPs}.
\newblock {\em CoRR}, abs/1411.3880, 2014.

\bibitem[CCGK15]{CCGK15}
K.~Chatterjee, M.~Chmelik, R.~Gupta, and A.~Kanodia.
\newblock {Optimal Cost Almost-sure Reachability in POMDPs}.
\newblock In {\em {AAAI}}. {AAAI} Press, 2015.

\bibitem[CKK15]{ckk15}
Krishnendu Chatterjee, Zuzana Kom{\'a}rkov{\'a}, and Jan Kret{\'i}nsk{\'y}.
\newblock {Unifying Two Views on Multiple Mean-Payoff Objectives in Markov
  Decision Processes}.
\newblock In {\em {LICS}}, pages 244--256. {IEEE} Computer Society, 2015.

\bibitem[HYV16]{HYV16:risk-pomdps}
Ping Hou, William Yeoh, and Pradeep Varakantham.
\newblock {Solving Risk-Sensitive POMDPs With and Without Cost Observations.}
\newblock In Dale Schuurmans and Michael~P. Wellman, editors, {\em {AAAI}},
  pages 3138--3144. AAAI Press, 2016.

\bibitem[KGFP09]{KGFP09}
H.~Kress-Gazit, G.~E. Fainekos, and G.~J. Pappas.
\newblock {Temporal-Logic-Based Reactive Mission and Motion Planning}.
\newblock {\em IEEE Transactions on Robotics}, 25(6):1370--1381, 2009.

\bibitem[KHL08]{khl08}
H.~Kurniawati, D.~Hsu, and W.S. Lee.
\newblock {{SARSOP}: Efficient Point-Based {POMDP} Planning by Approximating
  Optimally Reachable Belief Spaces.}
\newblock In {\em {Robotics: Science and Systems}}, pages 65--72, 2008.

\bibitem[KLC98]{kaelbling1998planning}
L.~P. Kaelbling, M.~L. Littman, and A.~R. Cassandra.
\newblock {Planning and acting in partially observable stochastic domains}.
\newblock {\em Artificial intelligence}, 101(1):99--134, 1998.

\bibitem[KLM96]{LearningSurvey}
L.~P. Kaelbling, M.~L. Littman, and A.~W. Moore.
\newblock {Reinforcement learning: A survey}.
\newblock {\em Journal of Artificial Intelligence Research}, 4:237--285, 1996.

\bibitem[KS06]{KS06}
Levente Kocsis and Csaba Szepesv{\'a}ri.
\newblock {Bandit Based Monte-Carlo Planning}.
\newblock In Johannes F{\"u}rnkranz, Tobias Scheffer, and Myra Spiliopoulou,
  editors, {\em {ECML}}, volume 4212 of {\em {LNCS}}, pages 282--293. Springer,
  2006.

\bibitem[LCK95]{lck95}
M.~L. Littman, A.~R. Cassandra, and L.~P Kaelbling.
\newblock {Learning Policies for Partially Observable Environments: Scaling
  Up}.
\newblock In {\em {ICML}}, pages 362--370, 1995.

\bibitem[LDK95]{ldk95}
Michael~L. Littman, Thomas~L. Dean, and Leslie~Pack Kaelbling.
\newblock {On the Complexity of Solving Markov Decision Problems}.
\newblock In Philippe Besnard and Steve Hanks, editors, {\em {UAI}}, pages
  394--402. Morgan Kaufmann, 1995.

\bibitem[Lit96]{LittmanThesis}
M.~L. Littman.
\newblock {\em {Algorithms for Sequential Decision Making}}.
\newblock PhD thesis, Brown University, 1996.

\bibitem[PMP{\etalchar{+}}15]{PMPKGB15:constrained-POMDP}
Pascal Poupart, Aarti Malhotra, Pei Pei, Kee{-}Eung Kim, Bongseok Goh, and
  Michael Bowling.
\newblock {Approximate Linear Programming for Constrained Partially Observable
  Markov Decision Processes}.
\newblock In {\em AAAI}, pages 3342--3348. {AAAI} Press, 2015.

\bibitem[PT87]{PT87}
C.~H. Papadimitriou and J.~N. Tsitsiklis.
\newblock {The complexity of {M}arkov Decision Processes}.
\newblock {\em Mathematics of Operations Research}, 12:441--450, 1987.

\bibitem[Put05]{Puterman2005}
M.~L. Puterman.
\newblock {\em {{Markov} Decision Processes}}.
\newblock Wiley-Interscience, 2005.

\bibitem[RN10]{RN10}
Stuart~J. Russell and Peter Norvig.
\newblock {\em Artificial Intelligence - {A} Modern Approach (3. internat.
  ed.)}.
\newblock Pearson Education, 2010.

\bibitem[RPPCd08]{RPPC08:online-planning-pomdp}
St{\'e}phane Ross, Joelle Pineau, S{\'e}bastien Paquet, and Brahim Chaib-draa.
\newblock {Online Planning Algorithms for POMDPs.}
\newblock {\em J. Artif. Intell. Res. (JAIR)}, 32:663--704, 2008.

\bibitem[RRS15]{rrs15}
Mickael Randour, Jean-Fran\c{c}ois Raskin, and Ocan Sankur.
\newblock {Variations on the Stochastic Shortest Path Problem}.
\newblock In Deepak D'Souza, Akash Lal, and Kim~Guldstrand Larsen, editors,
  {\em {VMCAI}}, volume 8931 of {\em {LNCS}}, pages 1--18. Springer, 2015.

\bibitem[Son71]{sondik}
E.~J. Sondik.
\newblock {\em {The Optimal Control of Partially Observable Markov Processes.}}
\newblock Stanford University, 1971.

\bibitem[SS04]{ss04}
T.~Smith and R.~Simmons.
\newblock {Heuristic search value iteration for {POMDPs}}.
\newblock In {\em {UAI}}, pages 520--527. AUAI Press, 2004.

\bibitem[STW16]{STW16:BWC-POMDP-state-safety}
Pedro Henrique de Rodrigues Quemel e~Assis Santana, Sylvie Thi{\'e}baux, and
  Brian~C. Williams.
\newblock {RAO*: An Algorithm for Chance-Constrained POMDP's}.
\newblock In {\em AAAI}, pages 3308--3314. {AAAI} Press, 2016.

\bibitem[SV05]{sv05}
Matthijs T.~J. Spaan and Nikos~A. Vlassis.
\newblock {Perseus: Randomized Point-based Value Iteration for POMDPs}.
\newblock {\em J. Artif. Intell. Res. {(JAIR)}}, 24:195--220, 2005.

\bibitem[SV10]{SV:POMCP}
David Silver and Joel Veness.
\newblock {Monte-Carlo Planning in Large POMDPs}.
\newblock In J.~D. Lafferty, C.~K.~I. Williams, J.~Shawe-Taylor, R.~S. Zemel,
  and A.~Culotta, editors, {\em {Advances in Neural Information Processing
  Systems 23}}, pages 2164--2172. Curran Associates, Inc., 2010.

\bibitem[UH10]{UH10:constrained-pomdp-online}
Aditya Undurti and Jonathan~P How.
\newblock {An online algorithm for constrained POMDPs}.
\newblock In {\em {Robotics and Automation (ICRA), 2010 IEEE International
  Conference on}}, pages 3966--3973. IEEE, 2010.

\bibitem[Whi93]{White:threshold-mdps}
D.J. White.
\newblock {Minimizing a Threshold Probability in Discounted Markov Decision
  Processes}.
\newblock {\em Journal of Mathematical Analysis and Applications},
  173(2):634--646, March 1993.

\bibitem[ZP96]{ZP:games-graphs}
U.~Zwick and M.~Paterson.
\newblock {The Complexity of Mean Payoff Games on Graphs}.
\newblock {\em Theoretical Computer Science}, 158(1--2):343--359, 1996.

\end{thebibliography}


\begin{thebibliography}{BMR14}

\bibitem[BHO15]{BHO15:target-disc-sum}
U.~Boker, T.~A. Henzinger, and J.~Otop.
\newblock {{T}he {T}arget {D}iscounted-{S}um {P}roblem}.
\newblock In {\em {LICS}}, pages 750--761, July 2015.

\bibitem[BMR14]{bmr14}
V{\'{e}}ronique Bruy{\`{e}}re, No{\'{e}}mie Meunier, and Jean-Fran{\c{c}}ois
  Raskin.
\newblock Secure equilibria in weighted games.
\newblock In {\em CSL-LICS}, pages 26:1--26:26, 2014.

\bibitem[CD10]{cd10}
Krishnendu Chatterjee and Laurent Doyen.
\newblock {The complexity of partial-observation parity games}.
\newblock In {\em {LPAR}}, pages 1--14. Springer, 2010.

\bibitem[CDH10]{cdh10}
Krishnendu Chatterjee, Laurent Doyen, and Thomas~A. Henzinger.
\newblock {Quantitative languages}.
\newblock {\em {ACM} Transactions on Computational Logic}, 11(4), 2010.

\bibitem[HM15]{HM15}
Axel Haddad and Benjamin Monmege.
\newblock {Why Value Iteration Runs in Pseudo-Polynomial Time for
  Discounted-Payoff Games}.
\newblock Technical note, Universit{\'e} libre de Bruxelles, 2015.

\end{thebibliography}
}

\clearpage
\appendix
\begin{center}
{\Large Technical Appendix}
\end{center}

\section{Examples of Section~\ref{sec:prelims}}
Here is presented a detailed analysis of all possible policies, and the best 
policy in terms of optimized expected payoff. Firstly observe that a policy is 
uniquely determined if the first performed action is in the set 
$\{\mathit{m_1},\mathit{m_2},\mathit{sense}\}$. The remaining case is to 
perform action $\mathit{ms}$ $n$ times for some $n\in \mathbb{N}$ (if we 
successfully make transition to $\mathit{mnd}$ before performing all $n$ 
actions $\mathit{ms}$, policy is still uniquely determined), and then perform 
some action in the set $\{\mathit{m_1},\mathit{m_2},\mathit{sense}\}$. 
Alternatively, it is possible to just perform $\mathit{ms}$ until 
$\mathit{mnd}$ is successfully reached. Below are computed expected payoffs for 
each of the cases listed above.

\begin{itemize}
	
	\item $\sigma_1$: $\mathit{m_1}$ performed first
	\begin{flalign*}
	\evalue^\pomdp(\sigma_1) &= 0.9\cdot[1\cdot 0+\gamma\cdot 100]=45.&&
	\end{flalign*}
	
	\item $\sigma_2$: $\mathit{m_2}$ performed first
	\begin{flalign*}
	\evalue^\pomdp(\sigma_2) &= 0.1\cdot[1\cdot 0+\gamma\cdot 100]=5.&&
	\end{flalign*}
	
	\item $\sigma_{\mathit{sense}}$: $\mathit{sense}$ performed first
	\begin{flalign*}
	\evalue^\pomdp(\sigma_{\mathit{sense}}) &= 0+\gamma\cdot 0+\gamma^2\cdot 100=25.&&
	\end{flalign*}
	
	\item $\sigma_{\mathit{ms}}$: $\mathit{ms}$ performed until transition to 
	$\mathit{mnd}$ is successful
	\begin{flalign*}
	\evalue^\pomdp(\sigma_{\mathit{ms}}) &= \sum_{k=0}^{\infty}(\frac{2}{5})^k\cdot 
	\frac{3}{5}\cdot \gamma^{k+1}\cdot 100=37.5.&&
	\end{flalign*}
	
	\item $\sigma^n_1$: $\mathit{ms}$ performed $n$ times, then $m_1$
	\begin{flalign*}
	\evalue^\pomdp(\sigma^n_1) &= \sum_{k=0}^{n-1}(\frac{2}{5})^k\cdot 
	\frac{3}{5}\cdot \gamma^{k+1}\cdot 100&&\\
	&+ (\frac{2}{5})^n\cdot [0.9\cdot \gamma^{n+1}\cdot 100+ 0.1\cdot 
	0]=37.5+\frac{7.5}{5^n}.&&
	\end{flalign*}
	
	\item $\sigma^n_2$: $\mathit{ms}$ performed $n$ times, then $m_2$
	\begin{flalign*}
	\evalue^\pomdp(\sigma^n_2) &= \sum_{k=0}^{n-1}(\frac{2}{5})^k\cdot 
	\frac{3}{5}\cdot \gamma^{k+1}\cdot 100&&\\
	&+ (\frac{2}{5})^n\cdot [0.1\cdot \gamma^{n+1}\cdot 100+ 0.9\cdot 
	0]=37.5-\frac{32.5}{5^n}.&&
	\end{flalign*}
	
	\item $\sigma^n_{\mathit{sense}}$: $\mathit{ms}$ performed $n$ times, then 
	$m_{\mathit{sense}}$
	\begin{flalign*}
	\evalue^\pomdp(\sigma^n_{\mathit{sense}}) &= \sum_{k=0}^{n-1}(\frac{2}{5})^k\cdot 
	\frac{3}{5}\cdot \gamma^{k+1}\cdot 100&&\\
	&+ (\frac{2}{5})^n\cdot \gamma^{n+2}\cdot 100=37.5-\frac{12.5}{5^n}.&&
	\end{flalign*}
	
\end{itemize}

It is hence clear that in Example 1 the expected payoff is optimized for 
$\sigma=\sigma_1$. In Example 2 though, if we introduce a threshold $t=5$, this 
policy does not work as if the initial state is $t_2$, payoff is $0$. Looking 
above at possible policies, $\sigma_1$, $\sigma_2$, $\sigma^n_1$ and 
$\sigma^n_2$ do not satisfy the imposed worst-case condition as we may have 
payoff $0$. If $\mathit{ms}$ returns us to the initial state for at least three 
times, total payoff is at most $100/2^5=3.125<5$, so $\sigma_{\mathit{ms}}$ and 
$\sigma^n_{\mathit{sense}}$ also do not satisfy the condition for $n\geq 3$. 
Hence, policies satisfying the worst case condition are 
$\sigma_{\mathit{sense}}$ and $\sigma^n_{\mathit{sense}}$ for $n\in \{1,2\}$. 
It is easily verified from above that $\sigma''=\sigma^2_{\mathit{sense}}$ 
optimizes expected payoff with $\evalue^\pomdp(\sigma^2_{\mathit{sense}})=37$, and the 
worst case is achieved if both $\mathit{ms}$ fail with 
$\wvalue^\pomdp(\sigma^2_{\mathit{sense}})=6.25$.

\section{On the assumption of observable rewards
	(Section~\ref{sec:future-values})}\label{sec:assumption-observable}
If the rewards of a given POMDP are not observable,
the computation of future values is at least as hard as solving the
\emph{target discounted sum problem}, a long-standing open problem in automata
theory related to other open problems in algebra~\citetrsec{BHO15:target-disc-sum}.

\paragraph*{Under-approximation of $\fvalue$.}
For POMDPs with non-observable rewards, there is a straightforward way of
obtaining an under-approximation $\valbound$ of $\fvalue$. Following the value
iteration algorithm for discounted-sum games outlined in
Section~\ref{sec:future-values} and detailed in~\citetrsec{HM15}, it is possible to
obtain the exact future values. Furthermore, it is easy to see that the
functions $f^{(i)}$ generated by the algorithm get ever closer to the actual
future values. Hence, stopping the iteration at any $i \ge 0$ yields the desired
under-approximation. (Note that for this argument to be valid, the reward
function must assign to every transition a non-negative value. However, this
assumption is no loss of generality since, for any given POMDP, the threshold and the rewards of all
the transitions can be ``shifted and scaled'' so that the assumption holds.)

\section{Formal Proof of Lemma~\ref{lem:only-belief-support} and
Theorem~\ref{thm:fval-complexity}}
In this section we argue that, for POMDPs with observable rewards, we can reduce the
computation of a policy with worst-case value above a given threshold to the
computation of a policy, with the same property, in a \emph{full-observation
discounted-sum game}. This will give us access to the theoretical tools
developed for that kind of game by the formal verification community. The idea
is simple: we will construct a \emph{weighted arena} in which states
correspond to subsets of states from the POMDP with the same observation, and
the new transitions model transitions with non-zero probability in the POMDP. This
\emph{subset construction} captures the fact that in a POMDP, after any history,
any one from a set of possible states with the same observation could be the
actual state of the system. The assumption that the POMDP has observable rewards
will then allow us to weight the transitions of the arena without losing information
about the original POMDP. 

We observe that this reduction, and the fact that the policy we are looking
for in the original POMDP can be directly obtained from the constructed
discounted-sum game, imply that the probabilities of the POMDP do not really
matter when considering the worst-case value. Thus,
Lemma~\ref{lem:only-belief-support} follows.

Given a POMDP $\game=(\states,\act,\trans,\reward,\obs,\obsmap,\initd)$ with
observable rewards, we construct the weighted arena $\Gamma_\game =
(Q,I,\act,\Delta,w)$ where:
\begin{itemize}
	\item~$Q = \{T \subseteq \states \mid T \neq \emptyset \text{ and }
		\obsmap(s) = \obsmap(s') \text{ for all } s,s' \in T \}$ is a
		finite set of states;
	\item~$I = \{q \in Q \mid \supp(\initd) \cap q = q\}$ is the
		set of initial states;
	\item~$\Delta \subseteq Q \times \act \times Q$ includes transitions of the
		form $(q,a,q')$ if $q,q' \in Q$ and $\bigcup_{s \in
		q} \supp(\trans(s,a)) \cap \obsmap(s') = q'$ for any $s' \in
		q'$;
	\item~$w$ is a weight function of the form $\Delta \to \mathbb{R}$
		determined by $\reward$ as follows: $w(q,a,q') = r(s,a)$
		for any $s \in q$.
\end{itemize}
A play or infinite path in a weighted arena is a sequence $\pi = q_0 a_0
\ldots)$ of states and actions s.t. $q_0 \in I$ and for all $i \geq 0$ we have
$q_i a_i q_{i+1} \in \Delta$. We denote by $\Pi$ the set of all plays. A
(finite) path is a finite prefix of a play ending in a state. Since the game has
full observation, a history in a weighted arena is simply a path. The discounted
sum of a play is defined as for POMDPs but using the weight function $w$ instead
of $\reward$. The definitions for policy and worst-case value are then
identical. (For clarity, we write $\wvalue'$ instead of $\wvalue$ when referring to
the worst-case value in $\Gamma_\game$.)

\paragraph*{From histories of the POMDP to histories in the game.} We now
define a mapping $\mu$ from observation-action sequences to state-action
sequences in the constructed weighted arena. For a history $h = a_0 o_0
\ldots$ from $\game$ we let $\mu(h) = q_0 a_0 \ldots$ where $q_0 = 
\supp(\initd)$ and for all $i \ge 0$ we have $q_{i+1} = \bigcup_{s \in q_i}
\supp(\trans(s,a)) \cap o_{i+1}$. 

\begin{claim}
	The function $\mu$ is a bijective function from histories in $\game$ to
	paths in $\Gamma_\game$.
\end{claim}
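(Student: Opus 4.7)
The plan is to prove bijectivity of $\mu$ by constructing an explicit inverse $\nu$ and verifying that the two compositions are the identity. The argument naturally splits into three essentially routine steps plus one minor technical point about the initial state.

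First, I would verify that $\mu$ is well-defined, i.e., $\mu(h)$ is indeed a path in $\Gamma_\game$ for every history $h = a_0 o_1 \ldots a_{n-1} o_n$. This goes by induction on $|h|$: in the inductive step, the set $q_{i+1} = \bigcup_{s \in q_i} \supp(\trans(s, a_i)) \cap \obsmap^{-1}(o_{i+1})$ is non-empty because $h$ being a history furnishes a consistent path in $\game$ realizing it; every element of $q_{i+1}$ carries observation $o_{i+1}$, so $q_{i+1} \in Q$; and $(q_i, a_i, q_{i+1}) \in \Delta$ by the very definition of $\Delta$.

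Second, I would define the candidate inverse $\nu$ on a path $\pi = q_0 a_0 q_1 \ldots q_n$ in $\Gamma_\game$ by $\nu(\pi) = a_0 o_1 \ldots a_{n-1} o_n$, where $o_i = \obsmap(s)$ for any $s \in q_i$. This is well-defined since all states in $q_i$ share an observation by the definition of $Q$. To argue that $\nu(\pi)$ is in fact a history of $\game$, I would exhibit a witnessing consistent path $w = s_0 a_0 \ldots s_n$ by \emph{backward} induction: pick any $s_n \in q_n$, and at each step down use the inclusion $q_{i+1} \subseteq \bigcup_{s \in q_i} \supp(\trans(s, a_i))$ (which is built into the definition of $\Delta$) to extract $s_i \in q_i$ with $s_{i+1} \in \supp(\trans(s_i, a_i))$. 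Backward induction is crucial here: trying to build $w$ forward can force a bad choice of $s_0$.

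Third, I would check the two identities. That $\nu \circ \mu = \mathrm{id}$ is immediate: $\mu$ preserves the action labels of $h$, and the intersection with $\obsmap^{-1}(o_{i+1})$ in the recursion ensures that $\nu$ reads off precisely the original observations of $h$. For $\mu \circ \nu = \mathrm{id}$, I would show by induction that applying the recursion of $\mu$ along $\nu(\pi)$ recovers exactly the state sequence of $\pi$; this reduces to the fact that the definition of $\Delta$ forces $q_{i+1}$ to be the unique successor of $q_i$ determined by $(a_i, o_{i+1})$. The main (though modest) obstacle lies in the base case of this last step: reconciling the initial element $q_0 = \supp(\initd)$ produced by $\mu$ with the requirement $q_0 \in I \subseteq Q$ for paths in $\Gamma_\game$. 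Under the standard convention that all states of $\supp(\initd)$ share a common observation (which can always be enforced by prepending a dummy initial observation), $\supp(\initd)$ itself lies in $I$ and the bijection matches up on the nose; otherwise $\mu$ must be slightly refined so that the first element of the resulting path is the subset of $\supp(\initd)$ matching the first observation of $h$, but this adjustment leaves the rest of the argument intact.
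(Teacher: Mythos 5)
Your proposal is correct and takes essentially the same route as the paper's proof: both construct the inverse by reading the common observation off each state $q_i$ of a path in $\Gamma_\game$, and both establish that this candidate pre-image is a genuine history by exhibiting a witnessing path in $\game$ --- your backward induction is precisely the induction the paper invokes when it notes that, by construction of $\Gamma_\game$, every $s' \in q_i$ has a predecessor $s \in q_{i-1}$ with $\trans(s'|s,a) > 0$. Your extra care (well-definedness of $\mu$, the explicit composition identities, and the mismatch between $q_0 = \supp(\initd)$ and the definition of $I$) fills in details the paper leaves implicit or glosses over, but does not constitute a different argument.
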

\begin{proof}
	Clearly $\mu$ is injective. We will argue that it is also bijective.
	Consider a path $\pi = q_0 a_0 \ldots q_n$ from $\Gamma_\game$. We have
	that $\mu^{-1}(\pi) = a_0 \ldots o_n$ where $o_i = \obsmap(s_i)$ for
	any $s_i \in q_i$ and for all $0 < i \le n$. It remains to show that
	there is a path $\rho = s_0 a_0 \ldots s_n$ in $\game$ s.t.
	$\histfunc(\rho) = \mu^{-1}(\pi)$, to conclude that $\mu^{-1}(\pi)$ is a
	valid history in $\game$. By construction of $\Gamma_\game$ we have
	that, for all $0 < i\le n$, for all states $s' \in q_i$ there is $s \in
	q_{i-1}$ s.t. $\trans(s'|s,a) > 0$. The result follows by
	induction.
\end{proof}
It follows that there are bijective mappings from policies in $\game$ to
policies in $\Gamma_\game$, and from plays in $\game$ to plays in
$\Gamma_\game$. For a policy $\sigma$ in $\game$, let us denote by
$\mu(\sigma)$ the corresponding policy in $\Gamma_\game$; for a play $\rho$ in
$\game$, $\mu(\rho)$ for the play in $\Gamma_\game$.

\begin{lemma}
	For any policy $\sigma$ in $\game$ and for any policy $\sigma'$ in
	$\Gamma_\game$, if $\mu(\sigma) = \sigma'$ then $\wvalue(\sigma) =
	\wvalue'(\sigma')$.
\end{lemma}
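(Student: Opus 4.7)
The plan is to extend the bijection $\mu$ from histories/paths to plays and then exploit (i) that this extension preserves consistency with the (corresponded) policies and (ii) that the observable rewards assumption forces corresponding plays to have the same discounted payoff. Since worst-case values are infima over consistent plays, this will yield the desired equality.

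First I would extend $\mu$ to infinite plays. Given $\rho = s_0 a_0 s_1 a_1 \dots$ in $\game$, set $\mu(\rho) = q_0 a_0 q_1 a_1 \dots$ where $q_0 = \supp(\initd)$ and, inductively, $q_{i+1} = \bigcup_{s \in q_i}\supp(\trans(s,a_i)) \cap \obsmap(s_{i+1})$. Equivalently, $\mu(\rho)$ is obtained by applying $\mu$ to each finite prefix of the history $\histfunc(\rho)$ and passing to the limit. Note that $\mu(\rho)$ depends only on $\histfunc(\rho)$, so different state-sequences sharing the same observation sequence collapse to the same play in $\Gamma_\game$.

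Next I would verify the two properties underlying the correspondence. For consistency, the bijection between finite histories of $\game$ and finite paths of $\Gamma_\game$, combined with the definition $\mu(\sigma)(a \mid \pi) := \sigma(a \mid \mu^{-1}(\pi))$, gives: $\rho$ is consistent with $\sigma$ iff every finite prefix of $\mu(\rho)$ is allowed by $\mu(\sigma)$ and by $\Delta$, iff $\mu(\rho)$ is consistent with $\sigma'$. For payoffs, observe that $\reward$ being observable (Lemma~\ref{lem:obsrew}) ensures $w(q_i,a_i,q_{i+1}) = \reward(s_i,a_i)$ for the actual $s_i \in q_i$ appearing in $\rho$. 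Term-by-term, the discounted series defining $\discpath{\rho}{\discount}$ and $\discpath{\mu(\rho)}{\discount}$ coincide, hence the two discounted payoffs are equal.

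Finally I would combine these to equate the two infima. One direction is immediate: every $\rho$ consistent with $\sigma$ yields a $\mu(\rho)$ consistent with $\sigma'$ of the same discounted payoff, so $\wvalue'(\sigma') \leq \wvalue(\sigma)$. The converse requires that every play $\pi = q_0 a_0 q_1 a_1 \dots$ consistent with $\sigma'$ be realized as $\mu(\rho)$ for some $\rho$ consistent with $\sigma$. This is the step where a little care is required: by the construction of $\Delta$, every $q_{i+1}$ is non-empty and for each $s' \in q_{i+1}$ there is some $s \in q_i$ with $\trans(s' \mid s,a_i) > 0$. A finite-prefix induction (or, if one prefers, a König's-lemma argument on the tree of candidate state-sequences) produces the required $\rho$, whose history maps to $\pi$ under $\mu$; consistency with $\sigma$ is inherited via $\mu(\sigma) = \sigma'$, and the payoffs match by the observable-rewards argument above. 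Hence $\wvalue(\sigma) \leq \wvalue'(\sigma')$ as well. The only genuine obstacle is this last surjectivity/realization step; everything else is a routine unrolling of definitions.
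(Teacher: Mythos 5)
Your proof is correct and takes essentially the same approach as the paper's: both rest on the finite-history bijection $\mu$, an induction showing that $\mu$ and $\mu^{-1}$ preserve consistency with the corresponding policies, and the observable-rewards assumption to match discounted payoffs term by term, after which the two infima coincide. If anything, your explicit K\H{o}nig's-lemma step realizing every play consistent with $\sigma'$ as the image of a play consistent with $\sigma$ is more careful than the paper, whose induction is stated only for finite prefixes and passes to infinite plays implicitly.
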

\begin{proof}
	First, note that since $\game$ has observable rewards, then for all
	histories $h = a_0 \ldots o_n$ we have that for any two paths 
	$\rho = s_0 a_0 \ldots s_n, \rho' = s'_0 a_0 \ldots s'_n$ s.t.
	$\histfunc(\rho) = \histfunc(\rho') = h$ the following holds:
	\[
		\sum^{n-1}_{i=0} \discount^i \reward(s_i,a_i) = \sum^{n-1}_{i=0}
		\discount^i \reward(s'_i,a_i).
	\]
	Furthermore, by construction of $\Gamma_\game$ we also have that
	\[
		\sum^{n-1}_{i=0} \discount^i w(q_i,a_i,q_{i+1}) = \sum^{n-1}_{i=0}
		\discount^i \reward(s_i,a_i).
	\]
	Thus, for the result to follow, it suffices for us to show that for any
	policy $\sigma$ in $\game$ and corresponding $\sigma'$ in
	$\Gamma_\game$, if $\mu(\sigma) = \sigma'$ then $\mu$ is also bijective
	when restricted to plays consistent with $\sigma$ and $\sigma'$ in the
	respective structures.  We proceed by induction. Note that for any
	history $h$ in $\game$ with only one observation and consistent with
	$\sigma$ we have that $\mu(h)$ is consistent with $\sigma' =
	\mu(\sigma)$ since no choice has been made by the policies.
	Conversely, for any path $\pi$ in $\Gamma_\game$ with only one element,
	and consistent with $\sigma'$, $\mu^{-1}(\pi)$ is consistent with
	$\sigma = \mu^{-1}(\pi)$ for the same reason.
	Hence, for some $\ell$, $\mu$ is a bijective function from histories in
	$\game$ to paths in $\Gamma_\game$, all of length at most $\ell$.
	Consider a history $h = a_0 \ldots a_{\ell-1} o_\ell$ in $\game$
	consistent with $\sigma$ and let us write $\mu(h) =
	q_0 \ldots q_\ell$. By induction hypothesis, we know
	$\mu(a_0 \ldots o_{\ell-1}) =  q_0 \ldots q_{\ell-1}= \pi$ is
	consistent with $\sigma'$.  Observe that:
	\begin{itemize}
		\item $\sigma'(\pi) = \sigma(\mu^{-1}(\pi)) =
			\sigma(a_0 \ldots o_{\ell-1})$ and therefore $a_\ell
			\in \supp(\sigma'(\pi))$ since $h$ is consistent with
			$\sigma$;
		\item by definition of a history, there is some path
			$\chi = s_0 a_0 \ldots s_{\ell-1} a_{\ell-1} s_\ell)$ in
			$\game$ with $\histfunc(\chi) = h$; and
		\item by construction of $\Gamma_\game$ and definition of $\mu$
			we have that $s_{\ell-1} \in q_{\ell-1}$ and
			$(q_{\ell-1},a_{\ell-1},q_{\ell}) \in \Delta$.
	\end{itemize}
	It follows that $\mu(h)$ is also consistent with $\sigma'$. To show the
	other direction, we now take a path $\pi =
	q_0 a_0 \ldots a_{\ell-1} q_{\ell}$ in $\Gamma_\game$ consistent with
	$\sigma'$ and write $\mu^{-1}(\pi) = \ldots o_\ell$. It follows
	from inductive hypothesis that $\mu^{-1}(q_0 \ldots q_{\ell-1}) =
	\ldots o_{\ell-1} = h$ is consistent with $\sigma$. Since
	$\sigma(h) = \sigma'(\mu^{-1}(h))$, we have that $\supp(\sigma(h)) \ni
	a_{\ell -1}$. Also, for any $s \in q_\ell$ we have $o_\ell =
	\obsmap(s)$. Hence the claim holds and the result follows by induction.
\end{proof}

It follows from the above arguments that computing the worst-case value can be
done in exponential time for POMDPs with discounted sum and observable rewards.
This is, in fact, a tight complexity result. Indeed, safety and reachability
games with partial observation are EXP-hard~\citetrsec{cd10} even if the
objective is observable.  One can easily reduce either of them to a
discounted-sum objective in a POMDP by placing rewards or costs on target (or
unsafe) transitions (depending of the game we reduce from) and asking for
non-negative worst-case value. Therefore, deciding a threshold problem for the
worst-case value in POMDPs with discounted sum is EXP-complete.
\begin{theorem}
	The worst-case threshold problem for POMDPs with discounted sum and
	observable rewards is EXP-complete.
\end{theorem}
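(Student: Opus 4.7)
The plan is to prove EXP-completeness by establishing EXP membership and matching EXP-hardness separately.

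\textbf{EXP membership.} The subset construction $\Gamma_\game$ presented earlier in this appendix section has at most $2^{|\states|}\cdot|\obs|$ states (one per non-empty subset of $\states$ sharing a common observation), and hence size exponential in $|\game|$. The preceding lemma in this section provides a bijection $\mu$ between policies of $\game$ and of $\Gamma_\game$ satisfying $\wvalue^\game(\sigma)=\wvalue'(\mu(\sigma))$. Thus deciding whether some $\sigma$ achieves $\wvalue^\game(\sigma)\geq t$ in $\game$ reduces to the same threshold question in the fully observable two-player discounted-sum game $\Gamma_\game$, which can be decided in time polynomial in $|\Gamma_\game|$ by standard algorithms such as value or strategy iteration. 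The overall procedure therefore runs in time exponential in $|\game|$.

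\textbf{EXP-hardness.} I would reduce from the partial-observation safety game problem, shown EXP-hard in~\citetrsec{cd10}. Given such a game with state set $V$, protagonist actions $A$, adversarial branching, observation map $\mathcal{O}_G$, and absorbing unsafe set $U\subseteq V$, I build a POMDP $\game_G$ whose states are $V$ and whose actions are $A$. Each adversarial branch is encoded as a stochastic transition with an arbitrary full-support distribution over the adversary's options; by Lemma~\ref{lem:only-belief-support} the specific probabilities do not affect worst-case values. The observation function is $\mathcal{O}_G$, and the reward function assigns $-1$ to every transition whose target lies in $U$ and $0$ to every other transition, which is observable (after a routine splitting of observations if required). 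The threshold is $t=0$. If the protagonist has a winning safety strategy, the corresponding POMDP policy produces only plays that avoid $U$ forever, so its discounted sum is $0\geq t$. Conversely, if the protagonist loses, every POMDP policy admits a consistent play that enters $U$ at some step $k$, yielding discounted sum $-\discount^k<0$. Thus the safety game is won by the protagonist iff $\game_G$ admits a policy with worst-case value at least $0$.

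\textbf{Main obstacle.} The main subtlety is making the translation between partial-observation safety strategies and POMDP policies precise, in particular verifying that the bijection preserves consistency even though the POMDP sees a stochastic transition where the game sees an adversarial choice. Here Lemma~\ref{lem:only-belief-support} is pivotal: it allows us to ignore the induced probability measure on plays and reason purely about the support structure, which is exactly the information used to define winning in the partial-observation safety game. Once this correspondence is in place, both directions of the reduction follow routinely from the definitions of winning and of worst-case value.
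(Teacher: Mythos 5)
Your proof is correct and follows essentially the same route as the paper: EXP membership via the value-preserving bijection $\mu$ into the exponential-size full-observation arena $\Gamma_\game$, and EXP-hardness by reducing partial-observation safety games (EXP-hard by the cited work, even with observable objectives) to the worst-case threshold question, placing negative rewards on unsafe transitions and asking for worst-case value at least $0$. One caveat: your claim that the threshold question in $\Gamma_\game$ is decidable in time \emph{polynomial} in $|\Gamma_\game|$ overstates what is known, since polynomial-time solvability of discounted-sum games is a long-standing open problem; this does not affect the conclusion, however, because value iteration runs in time polynomial in the number of states of $\Gamma_\game$, the magnitude of the weights, and $1/(1-\discount)$, each of which is at most exponential in the size of the input POMDP, so the overall bound remains exponential.
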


\section{Formal Proof of Proposition~\ref{prop:allowed}}
Assume we are given
a POMDP $\game=(\states,\act,\trans,\reward,\obs,\obsmap,\initd)$ with
observable rewards and we have
constructed the corresponding weighted arena $\Gamma_\game =
(Q,I,\act,\Delta,w)$.

Recall the statement says:\\
Let $\valbound\colon 2^\states \rightarrow \Rset$ be a function s.t.
$\valbound(B)\leq \fvalue(B)$ for each $B\in 2^\states$.
\begin{enumerate}[$(i)$]
	\item Then any policy $\sigma$ that is $\Psi$-safe for $t$ satisfies
		$\wvalue^{\pomdp}(\sigma)\geq t$.
	\item Moreover a policy $\pi$ is $\fvalue$-safe for $t$ if and only if
		$\wvalue^{\pomdp}(\pi)\geq t$.
\end{enumerate}

Item $(i)$ can easily be shown to hold by induction on the definition of a
strategy being $\Psi$-safe. For Item $(ii)$ we refer the reader to~\citetrsec{bmr14},
in which the authors show that, in discounted-sum games, playing $\fvalue$-safe
for $t$ is sufficient and necessary to obtain at least $t$. The result then
follows from the reduction from worst-case value in POMDPs with discounted sum
to discounted-sum games.
%
%An observation used by Zwick and Paterson in their study of discounted-sum
%games~\citetrsec{ZP:games-graphs} will be useful now.
%\begin{remark}\label{rem:bellman}
%	For all $p \in Q$,
%	\[
%		\wvalue'(p) = \max_{a \in \act} \min_{q : (p,a,q) \in \Delta}
%		w(p,a,q) + \discount \wvalue'(q)
%	\]
%	where $\wvalue'(q)$ is equal to the value of the following expression if
%	we let $q$ be the only initial state in the game: $\sup_{\sigma}
%	\wvalue'(\sigma)$.
%\end{remark}
%That is, $\wvalue'(q)$ is the future value of state $q$ --- since, by
%construction, states of $\Gamma_\game$ are belief supports from $\game$.
%
%It then directly follows that:
%\begin{lemma}
%	For all $t \in \mathbb{R}$, for all policies $\sigma$ in
%	$\Gamma_\game$ with $\wvalue'(\sigma) \ge t$, for any path $\pi =
%	q_0 a_0 \ldots q_n$ consistent with, if $a \in \supp(\sigma(\pi))$
%	then
%	\[
%		\discsum_\discount(\pi) + \min_{p \in Q}
%		\discount^{n-1}\left(w(q_n,a,p) + \discount \wvalue'(p) \right)
%		\ge t.
%  	\]
%\end{lemma}
%\todo[inline]{Does the proposition follow? what is missing?}

\section{Open Theoretical Problems}
The worst-case planning problem is open for general POMDPs. A lower bound for
the computational complexity of that problem would entail a lower bound for
universality of discounted sum automata, which is open~\citetrsec{cdh10}. In the
other direction, an upper bound (that is, an algorithm or any kind of
decidability result) would translate into an upper bound for the target
discounted sum problem~\citetrsec{BHO15:target-disc-sum}. The latter was shown to be
more general than some important open problems in mathematics and computer
science.

The exact GPO problem (i.e., not the $\epsilon$-approximation we achieve in this
work) is also open, even for fully-observable MDPs. Remark that if the
worst-case value threshold given is in fact the future value of the initial
state, then we could construct a sub-graph of choices which satisfy the equation
from system~\eqref{eq:bellman} and be sure that it is a complete representation
of the set of all policies achieving the optimal worst-case value. Hence, we
could optimize the expected value in that graph only and solve the GPO problem.
If the worst-case threshold is strictly lower, then this idea does not work.
Indeed, sub-optimal early choices might force later turns in the game to be
played optimally and vice versa as well. 

{\small
\bibliographystyletrsec{alpha}
\bibliographytrsec{bibliography-master,new}
}

\end{document}